\documentclass[10pt]{article} 
\usepackage[preprint]{tmlr}

\usepackage{amsmath,amsfonts,bm}

\def\eqref#1{equation~\ref{#1}}

\def\1{\bm{1}}

\DeclareMathAlphabet{\mathsfit}{\encodingdefault}{\sfdefault}{m}{sl}
\SetMathAlphabet{\mathsfit}{bold}{\encodingdefault}{\sfdefault}{bx}{n}

\usepackage{hyperref}
\usepackage{url}
\usepackage{booktabs}
\usepackage{multirow}
\usepackage{makecell}
\usepackage{algorithm}
\usepackage{amsmath}
\usepackage{amsthm}
\usepackage{amssymb} 
\usepackage{algpseudocode}
\usepackage{fontawesome}
\usepackage{tikz}
\usetikzlibrary{positioning,arrows.meta,calc,fit,backgrounds}
\usepackage{graphicx}

\newtheorem{definition}{Definition}[section]
\newtheorem{proposition}{Proposition}[section]
\newtheorem{lemma}{Lemma}[section]
\newtheorem{theorem}{Theorem}[section]

\title{`Si'multaneous `S'patial-`T'emporal Message Passing for Dynamic Graph Representation Learning}

\author{\name Shubhajit Roy \email royshubhajit@iitgn.ac.in \\
      \addr Department of Computer Science and Engineering\\
      Indian Institute of Technology Gandhinagar
      \AND
      \name Anirban Dasgupta \email anirbandg@iitgn.ac.in \\
      \addr Department of Computer Science and Engineering\\
      Indian Institute of Technology Gandhinagar}

\def\month{MM}  
\def\year{YYYY} 
\def\openreview{\url{https://openreview.net/forum?id=XXXX}} 

\begin{document}

\maketitle

\begin{abstract}
Dynamic graph neural networks (DGNNs) that operate on snapshot sequences typically fall into one of two categories. \emph{Temporal-first} approaches build per-node temporal embeddings and only afterwards perform spatial aggregation, whereas \emph{Spatial-first} approaches invert this order, feeding the output of a graph convolution into a downstream temporal module. In either case, the rigid sequencing forces the second stage to consume an already-compressed summary produced by the first, ruling out joint reasoning over topology and evolution; concretely, the message-passing operator never gets to weight a neighbor's contribution by that neighbor's \emph{past} trajectory. This paper introduces \textbf{SiST-GNN} (\textbf{Si}multaneous \textbf{S}patial-\textbf{T}emporal \textbf{GNN}), which fuses the two signals inside a single message-passing operation rather than chaining them. Concretely, at each snapshot we maintain a recurrent hidden state per node that summarises its history, pair it with the node's current feature vector, and treat the pair as two nodes joined by a cross-time edge; running a standard graph convolution on this temporally augmented graph yields the updated representation. Our empirical study spans nine public baselines and fourteen model-dataset combinations, covering both fixed-split and live-update evaluation regimes. Across every public benchmark, SiST-GNN sets a new state of the art in link prediction task over the strongest prior method by $109$--$277\%$ in the fixed-split setting and by $68$--$194\%$ in the live-update setting. We additionally construct three dynamic node-classification tasks by discretising the underlying continuous-time event streams; here SiST-GNN beats the leading discrete-time (DTDG) baseline by $7$--$22\%$ and matches continuous-time (CTDG) methods that consume the raw events directly.
\end{abstract}

\section{Introduction}\label{sec:intro}
Most consequential relation systems, such as financial transaction networks, peer-to-peer trust graphs, and communication in social platforms, are inherently \emph{dynamic}: their topology and node attributes co-evolve over time \cite{10.5555/3455716.3455786, 10.1145/3534678.3539300}. Forecasting the next interaction in such a system is a canonical \emph{Dynamic Link Prediction} problem, and it has been tackled by a variety of DGNNs \cite{egcn, Xu2020Inductive, tgn_icml_grl2020, sankar2020dysat, trivedi2018dyrep, yu2023towards}. In an analogous setting, a subset of nodes may exhibit anomalous behavior. Consequently, detecting these anomalies is formulated as a \emph{node classification} task within the graph. DGNNs \cite{10.1145/1081870.1081893, tgn_icml_grl2020, trivedi2018dyrep, 10.1145/3448016.3457564, 10.1007/978-3-030-04167-0_33} are frequently employed to address this challenge.

When the input is discretized into a sequence of snapshots $\{\mathcal{G}_1, \dots,\mathcal{G}_T\}$ as is standard for trust networks, citation streams, and routing data, existing DGNNs almost universally adopt one of two architectural paradigms.

\begin{itemize}

    \item
        \textbf{Temporal-First (T$\to$S):} A recurrent or attention-based module first encodes the node feature \emph{trajectory}, and the resulting temporal summary is then passed to a GNN. Representative examples include EvolveGCN \cite{egcn}, where an RNN evolves the GNN's \emph{weights}.
    \item
        \textbf{Spatial-first (S$\to$T):} A GNN is applied per snapshot, and the resulting structural embeddings are further processed by a temporal module such as a GRU or LSTM. This is the dominant pattern, instantiated by GCRN \cite{10.1007/978-3-030-04167-0_33}, T-GCN \cite{8809901}, ROLAND \cite{10.1145/3534678.3539300} and TMetaNet \cite{li2025tmetanet}. 
\end{itemize}

\emph{The Information bottleneck.} What both paradigms share is a strict \emph{ordering}. One modality is consumed in full before the other is allowed to look at it. The downstream module can therefore only influence the final representation via the transformed output of the upstream module, and, crucially, it cannot modulate which information the upstream module retains. A spatial-first model, for example, cannot let its message-passing operator condition the aggregation of a node $v$'s feature on $v$'s historical activity, because no temporal information has been computed yet. Simultaneously, a temporal-first model cannot let its recurrent cell condition on the current structural neighborhood. We argue that this rigidity is responsible for a substantial part of the gap between current DGNNs. 

\emph{Simultaneous fusion as a third paradigm.} We propose that the two signals should interact within a single message-passing step. For every node $v$, we maintain a recurrent hidden state $\mathbf{h}_v^{(t)}$ that summarizes its trajectory, and at every snapshot we build a \emph{temporally augmented graph} with $N$ additional vertices, where the first $N$ carry the current spatial features $\mathrm{proj}(\mathbf{X}_t)$, and the next $N$ carry the recurrent summaries $\widetilde{\mathbf{X}}_t$. For every original edge $(u,v)\in\mathcal{E}_t$ we add a cross-time edge $(u+N, v)$ and $(u+N, u)$, so that each node receives both the present-time feature and the temporal summary of every neighbor and itself, all within one graph convolution. The output of the first $N$ vertices is the next-layer representation.

This simple construction has two consequences: first, the GNN operates on a richer message pool where each node sees $|\mathcal{N}(u)|$ structural messages \emph{and} $|\mathcal{N}(u)|+1$ temporal messages per layer, with the feature sets distinguishable by the message-passing operator through a standard aggregation function. Second, the assignment between spatial and temporal information becomes \emph{data-dependent and paper-neighbor}: a node can attend more to a neighbor's recent history when its current features are uninformative, and conversely fall back on present structure when temporal context is stale.

\paragraph{Contributions.}
We make the following contributions:
\begin{itemize}
    \item We identify an architectural bottleneck shared by all snapshot based DGNNs that imposes ordering between spatial and temporal computation, and motivate \emph{simultaneous} spatial-temporal message passing as a third paradigm (Section~\ref{sec:method}).
    \item We instantiate this paradigm as \textbf{SiST-GNN}: a stackable layer that fuses a recurrent cell with a graph convolution over a temporally augmented graph (Section ~\ref{sec:method}).
    \item We evaluate SiST-GNN against eight baselines, spanning static GNNs, snapshot DGNNs, evolving-weight architectures, ROLAND variants, and the meta-learner TMetaNet \cite{li2025tmetanet}, on six public benchmarks under fixed-split \emph{and} live-update protocols. SiST-GNN attains the best MRR on \emph{every} dataset and setting, improving over the baseline by $109-277\%$ (fixed-split) and $68-194\%$ (live-update).
    \item We further demonstrate that the same architecture on \emph{dynamic node classification}, discretizing the continuous-time datasets with event streams, into fixed-width snapshots, SiST-GNN improves test AUC by $7-22\%$ over the discrete-time baseline \emph{and achieves comparable results to continuous-time models}, which operate on the native event stream (Section \ref{sec:nc_results}).
\end{itemize}

\begin{figure*}[t]
\centering
%
%
\tikzset{
  nbox/.style   = {circle, draw=black!70, line width=0.75pt, fill=blue!10,
                   minimum size=0.65cm, inner sep=0pt, font=\small},
  obox/.style   = {circle, draw=black!70, line width=0.6pt, fill=orange!22,
                   minimum size=0.78cm, inner sep=0pt, font=\small},
  hbox/.style   = {circle, draw=black!70, line width=0.7pt, fill=red!10,
                   minimum size=0.5cm, inner sep=0pt, font=\scriptsize},
  tlayer/.style = {draw=orange!55!black, line width=0.5pt, rounded corners=2.2pt,
                   fill=yellow!28, minimum width=1.05cm, minimum height=0.5cm,
                   font=\scriptsize\bfseries, align=center, inner sep=1.5pt},
  arrS/.style   = {-{Stealth[length=1.6mm]}, color=blue!55!black, line width=0.7pt},
  arrT/.style   = {-{Stealth[length=1.6mm]}, color=orange!75!red, line width=0.7pt},
  arrO/.style   = {-{Stealth[length=2.0mm]}, color=black, line width=0.9pt},
  panellbl/.style = {font=\small\bfseries, align=center},
  legendlbl/.style = {font=\scriptsize, align=left}
}
\begin{minipage}[t]{0.32\textwidth}\centering
\begin{tikzpicture}
  \node[nbox] (Xa) at (-1.4, 1.6) {$x_a$};
  \node[nbox] (Xb) at ( 1.4, 1.6) {$x_b$};
  \node[nbox] (Xc) at ( 0.0,-1.75) {$x_c$};
  \node[tlayer] (TLa) at (-0.95, 0.78) {T-Layer};
  \node[tlayer] (TLb) at ( 0.95, 0.78) {T-Layer};
  \node[tlayer] (TLc) at ( 0.00,-0.85) {T-Layer};
  \node[nbox] (Xi) at (0,0) {$x_i$};
  \node[obox] (Xtil) at (2.05,0) {$\widetilde{x}_i$};
  \draw[arrT] (Xa.south east)  -- (TLa.north);
  \draw[arrS] (TLa.south east) -- (Xi.north west);
  \draw[arrT] (Xb.south west)  -- (TLb.north);
  \draw[arrS] (TLb.south west) -- (Xi.north east);
  \draw[arrT] (Xc.north)  -- (TLc.south);
  \draw[arrS] (TLc.north) -- (Xi.south);
  \draw[arrO] (Xi.east)   -- (Xtil.west);
\end{tikzpicture}\\[1pt]
{\small\textbf{(a) Temporal $\to$ Spatial}}\\
{\scriptsize\itshape per-neighbor}
\end{minipage}\hfill
\begin{minipage}[t]{0.32\textwidth}\centering
\begin{tikzpicture}
  \node[nbox] (Xa) at (-1.3, 1.45) {$x_a$};
  \node[nbox] (Xb) at ( 1.3, 1.45) {$x_b$};
  \node[nbox] (Xc) at ( 0.0,-1.55) {$x_c$};
  \node[nbox] (Xi) at (0,0) {$x_i$};
  \node[tlayer] (TL) at (1.25,0) {T-Layer};
  \node[obox]   (Xtil) at (2.55,0) {$\widetilde{x}_i$};
  \draw[arrS] (Xa.south east) -- (Xi.north west);
  \draw[arrS] (Xb.south west) -- (Xi.north east);
  \draw[arrS] (Xc.north) -- (Xi.south);
  \draw[arrT] (Xi.east)  -- (TL.west);
  \draw[arrO] (TL.east)  -- (Xtil.west);
\end{tikzpicture}\\[1pt]
{\small\textbf{(b) Spatial $\to$ Temporal}}\\
{\scriptsize\itshape per-target}
\end{minipage}\hfill
\begin{minipage}[t]{0.34\textwidth}\centering
\begin{tikzpicture}
  \node[nbox] (Xa) at (-1.55, 1.55) {$x_a$};
  \node[nbox] (Xb) at ( 1.35, 1.55) {$x_b$};
  \node[nbox] (Xc) at ( -0.55,-1.70) {$x_c$};
  \node[hbox] (Ha) at (-1.55, 0) {$\tilde{x}_a$};
  \node[hbox] (Hb) at ( 0, 1) {$\tilde{x}_b$};
  \node[hbox] (Hc) at (0.6,-0.85) {$\tilde{x}_c$};
  \node[tlayer] (TLa) at (-2, 0.75) {T-Layer};
  \node[tlayer] (TLb) at ( 0, 2) {T-Layer};
  \node[tlayer] (TLc) at ( 0.75,-1.70) {T-Layer};
  \node[nbox]  (Xi)   at (0,0) {$x_i$};
  \node[obox]  (Xtil) at (2.30,0) {$\widetilde{x}_i$};
  \draw[arrS] (Xa.south east) -- (Xi.north west);
  \draw[arrS] (Xb.south west) -- (Xi.north east);
  \draw[arrS] (Xc.north) -- (Xi.south west);
  \draw[arrT] (Xa.south west) --  (TLa.north);
  \draw[arrT] (TLa.south) -- (Ha.north west);
  \draw[arrT] (Xb.north west)  -- (TLb.east);
  \draw[arrT] (TLb.south) -- (Hb.north);
  \draw[arrT] (Xc.east) -- (TLc.west);
  \draw[arrT] (TLc.north) -- (Hc.south);
  \draw[arrT, densely dashed] (Ha.east) -- (Xi.west);
  \draw[arrT, densely dashed] (Hb.south) -- (Xi.north);
  \draw[arrT, densely dashed] (Hc.north west)      -- (Xi.south east);
  \draw[arrO] (Xi.east) -- (Xtil.west);
\end{tikzpicture}\\[1pt]
{\small\textbf{(c) Simultaneous (Ours)}}\\
{\scriptsize\itshape per-neighbor, per-modality}
\end{minipage}
\\[3pt]
\begin{tikzpicture}[baseline]
  \node[font=\scriptsize] at (0,0)
    {\textcolor{blue!55!black}{\rule[1pt]{0.5cm}{0.7pt}}~spatial msg.\quad
     \textcolor{orange!75!red}{\rule[1pt]{0.5cm}{0.7pt}}~temporal flow\quad
     \textcolor{orange!75!red}{\rule[1pt]{0.18cm}{0.7pt}~\rule[1pt]{0.18cm}{0.7pt}}~cross-time msg.\ ($\hat{\mathcal{E}}_t$)\quad
     \fbox{\scriptsize T-Layer} = per-node recurrent (LSTM) cell};
\end{tikzpicture}
\caption{Three paradigms for snapshot-based dynamic GNNs, viewed at the per-target node $x_i$ with neighbors $\{x_a, x_b, x_c\}$. (a)~\emph{Temporal-first} encodes each neighbor's feature trajectory through a temporal layer and \emph{then} aggregates the temporal summaries spatially. (b)~\emph{Spatial-first} aggregates neighbor features spatially into $x_i$ and \emph{then} applies a temporal layer to the aggregated representation. (c)~Our \emph{simultaneous} paradigm maintains, for every node, a temporal counterpart $\tilde{x}_v$ produced by a per-node T-Layer; the original neighbor features \emph{and} their temporal counterparts both reach $x_i$ within a \emph{single} graph convolution over the temporally augmented graph $\hat{\mathcal{G}}_t$, via solid intra-time edges and dashed cross-time edges. The message-passing operator can therefore learn a per-neighbor, per-modality trade-off between present structure and historical evolution.}
\label{fig:paradigms}
\end{figure*}

\section{Related Work}\label{sec:related}

\paragraph{Static graph neural networks.}
Modern graph representation learning is built on the family of message-passing networks initiated by GCN~\cite{kipf2017semisupervised}, GraphSAGE~\cite{10.5555/3294771.3294869}, GAT~\cite{velickovic2018graph}, and GIN~\cite{xu2018how}. SiST-GNN is agnostic to the underlying spatial operator, hence any of these layers can be plugged into the augmented-graph convolution.

\paragraph{Continuous-time DGNNs.}
A first family of DGNN models the graph as a stream of timestamped events. TGN~\cite{tgn_icml_grl2020}, TGAT~\cite{Xu2020Inductive}, JODIE~\cite{10.1145/3292500.3330895}, DyRep~\cite{trivedi2018dyrep}, CAW~\cite{wang2021inductive}, and APAN~\cite{10.1145/3448016.3457564} update node memories at every event using time-aware attention or recurrence, and have produced impressive results on fine-grained interaction data. Continuous-time approaches require per-edge temporal encodings and are less natural when the data is intrinsically snapshot-structured~\cite{10.5555/3600270.3602656,huang2023temporal,cong2023do}. A recent survey~\cite{11202740} provides a unified benchmark of CTDG and DTDG methods across link prediction and node classification.

\paragraph{Snapshot-based DGNNs: temporal-first.}
EvolveGCN-H and Evolv-eGCN-O~\cite{egcn} treat dynamic graph learning as a meta-learning problem over GNN \emph{weights}: an RNN evolves the GCN parameters across snapshots, while the per-snapshot graph convolution remains static. WinGNN~\cite{10.1145/3580305.3599551} drops explicit temporal encodings altogether and instead aggregates stochastic gradients across a sliding window of snapshots.\footnote{We were unable to reproduce the WinGNN MRR results. We therefore omit WinGNN from our results tables and treat it as related work.}

\paragraph{Snapshot-based DGNNs: spatial-first.}
GCRN~\cite{10.1007/978-3-030-04167-0_33} and T-GCN~\cite{8809901} first apply spectral or spatial convolutions and then aggregate the embeddings with a GRU or LSTM. DySAT~\cite{sankar2020dysat} replaces the recurrence with self-attention over snapshot embeddings. ROLAND~\cite{10.1145/3534678.3539300} generalizes the pattern, observing that any static GNN can be ``dynamized" by recurrently updating its hierarchical node states between snapshots, and introduces both a scalable incremental-training procedure and a live-update evaluation protocol that we adopt. DyGFormer~\cite{yu2023towards} and decoupled architectures~\cite{10.14778/3598581.3598595} further refine this template with neighbor co-occurrence features and decoupled propagation. TMetaNet~\cite{li2025tmetanet} augments the spatial-first pipeline with Dowker zigzag persistence diagrams that guide a meta-learning parameter-update rule. Like all spatial-first models, TMetaNet computes a per-snapshot structural embedding before any temporal signal has interacted with the message passing.

\paragraph{Other related ideas.}
Variational dynamic embeddings~\cite{10.5555/3454287.3455247,GOYAL2020104816,goyal2018dyngem,10.1145/3184558.3191526} and meta-learning-based dynamic models~\cite{10.1145/3580305.3599551,li2025tmetanet} share with us the high-level goal of better leveraging temporal context, but all operate within the sequential paradigm. Our construction of an augmented edge set bears a superficial resemblance to heterogeneous graph design~\cite{wang2021tcl}, where multiple semantic edge types are introduced by domain knowledge, but ours is that cross-time edges are added deterministically from the existing edge set without external annotation.

\paragraph{Dynamic node classification.}
A complementary line of work targets \emph{dynamic node classification} on the JODIE benchmarks~\cite{10.1145/3292500.3330895}, which are released as continuous-time event streams. Continuous-time models such as TGAT~\cite{Xu2020Inductive} and TGN~\cite{tgn_icml_grl2020} consume these streams directly, while TREND~\cite{10.1145/3485447.3512164} and GraphMixer~\cite{cong2023do} target the same setting through Hawkes-process and MLP-mixer formulations, respectively. More recently, prompt-based pre-training methods ~\cite{yu2025nodetime} learn a shared encoder upstream and adapt it to downstream tasks through node- and time-conditional prompts. Our work is orthogonal to this pre-training direction: we treat the task as supervised snapshot-based classification and do not pre-train on auxiliary objectives. We discuss this gap in Section ~\ref{sec:limitations}.

\paragraph{Positioning of this work.}
To the best of our knowledge, SiST-GNN is the first snapshot-based DGNN in which the per-node temporal state and the spatial neighborhood enter the \emph{same} graph convolution as neighbors, with separable but jointly-trained aggregation weights. The closest design philosophy is the two-stream decoupled GNN of~\cite{10.14778/3598581.3598595}, but that work still combines the streams by late-stage concatenation rather than by a single joint message-passing step.
\section{Preliminaries and Problem Formulation}\label{sec:prelim}

\begin{definition}[Dynamic graph sequence]
A \emph{dynamic graph sequence} is an ordered collection
$\{\mathcal{G}_t = (\mathcal{V}_t, \mathcal{E}_t, \mathbf{X}_t)\}_{t=1}^{T}$,
where $\mathcal{V}_t$ is the node set at snapshot $t$ (with $|\mathcal{V}_t|=N$),
$\mathcal{E}_t \subseteq \mathcal{V}_t \times \mathcal{V}_t$ is the edge set,
and $\mathbf{X}_t \in \mathbb{R}^{N \times d}$ encodes node features at $t$.
\end{definition}

\noindent We work with the standard ROLAND~\cite{10.1145/3534678.3539300} setting, in which nodes are indexed consistently across time, where a node absent in snapshot $t$ has no incident edges in $\mathcal{E}_t$ but retains its identifier. 

For \emph{future link prediction} experiments: given $\{\mathcal{G}_1,\ldots,\mathcal{G}_t\}$, predict the probability of an edge $(u,v) \in \mathcal{V}\times \mathcal{V}$ being present in $\mathcal{G}_{t+1}$.

For \emph{node classification} of anomalous nodes: $\{\mathcal{G}_1,\ldots,\mathcal{G}_t\}$, predict the probability of a node being anomalous in $\mathcal{G}_{t+1}$.

\begin{definition}[Temporal-first paradigm]\label{def:tf}
Given snapshot features $\mathbf{X}_t$ and previous state $\mathbf{H}_{t-1}$, the temporal-first paradigm computes
\begin{equation}
  \mathbf{Z}_t = \mathrm{GNN}\bigl(
      f_{\mathrm{temp}}(\mathbf{X}_t, \mathbf{H}_{t-1}),
      \mathcal{E}_t
  \bigr)
\end{equation}
where $f_{\mathrm{temp}}$ is a recurrent or attention-based temporal encoder.
\end{definition}

\begin{definition}[Spatial-first paradigm]\label{def:sf}
The spatial-first paradigm computes
\begin{equation}
  \mathbf{Z}_t = f_{\mathrm{temp}}\bigl(
      \mathrm{GNN}(\mathbf{X}_t, \mathcal{E}_t),
      \mathbf{H}_{t-1}
  \bigr)
\end{equation}
\end{definition}

\section{The SiST-GNN Architecture}\label{sec:method}

We now formalize our SiST-GNN layer, instantiate its temporal encoder, and analyze complexity.



\subsection{Temporal Encoder}\label{sec:temporal}

Each node $v$ keeps hidden and cell states $(\mathbf{h}_v^{(t)}, \mathbf{c}_v^{(t)}) \in \mathbb{R}^{d_h}\times\mathbb{R}^{d_h}$ that summarise its trajectory up to snapshot $t-1$. At snapshot $t$ the LSTM cell~\cite{10.1162/neco.1997.9.8.1735} consumes the current feature vector and updates the states,
\begin{equation}\label{eq:lstm}
  (\mathbf{h}_v^{(t)},\, \mathbf{c}_v^{(t)}) =
    \mathrm{LSTMCell}\bigl(
      \mathbf{x}_v^{(t)},
      (\mathbf{h}_v^{(t-1)}, \mathbf{c}_v^{(t-1)})
    \bigr)
\end{equation}
and we set the temporal summary to the current hidden state, $\widetilde{\mathbf{x}}_v^{(t)} \equiv \mathbf{h}_v^{(t)}$. The LSTM cell is shared across all $N$ nodes (its parameters do not scale with $|\mathcal{V}|$) and is applied row-wise, which preserves the permutation equivariance of the layer (Lemma ~\ref{lem:equiv}). We follow standard truncated BPTT~\cite{10.1162/neco.1990.2.4.490}: hidden and cell states are detached after every backward pass to bound the temporal credit-assignment window and to keep per-snapshot memory constant in $T$.

\subsection{Temporally Augmented Graph}\label{sec:aug}

\begin{definition}[Augmented graph]\label{def:aug}
Given $\mathcal{G}_t = (\mathcal{V}_t, \mathcal{E}_t)$ with $|\mathcal{V}_t|=N$, the \emph{temporally augmented graph} is
$\hat{\mathcal{G}}_t = (\hat{\mathcal{V}}_t, \hat{\mathcal{E}}_t)$ with
\begin{align}
  &\hat{\mathcal{V}}_t = \{1,\ldots,N\} \cup \{N{+}1,\ldots,2N\}\\
  &\hat{\mathcal{E}}_t = \mathcal{E}_t \cup \{(u+N, v) \mid (u,v)\in\mathcal{E}_t\} \cup \{(u+N, u) \mid u \in \mathcal{V}_t\}\\
  &\mathbf{X}_{\mathrm{aug}} = [\mathrm{proj}(\mathbf{X}_t)\,;\, \widetilde{\mathbf{X}}_t]
\end{align}
where $\mathrm{proj}(\mathbf{X}_t)$ is the projected node features in $d_h$ dimension and $\widetilde{\mathbf{X}}_t$ is the temporal encoded features.
\end{definition}

\noindent The cross-time edges $(u+N, v)$ and $(u+N, u)$ allow node $u$ to receive directly, via one round of standard message passing, the temporal summary of its neighbor $v$ and itself. Only original edges (intra-time edges) induce cross-time edges, so the augmented graph is always a sparse perturbation of the input ($|\hat{\mathcal{E}}_t| = 2|\mathcal{E}_t| + N$).

\subsection{Forward Pass}

One SiST-GNN layer computes
\begin{equation}\label{eq:forward}
  \mathbf{Z}_t = \sigma\Bigl(
    \mathrm{GNN}\bigl(
      \mathbf{X}_{\mathrm{aug}},
      \hat{\mathcal{E}}_t
    \bigr)_{[1:N]}
  \Bigr)
\end{equation}
where the $\sigma$ operator is element-wise nonlinearity. The first $N$ rows of the GNN output are the next-layer input; the lower $N$ rows are discarded. Algorithm ~\ref{alg:layer} gives the full single-snapshot procedure. When $L$ layers are stacked, each layer keeps its own recurrent state, and the cross-time edges are re-instantiated at every depth.
\begin{algorithm}[t]
\caption{SiST-GNN layer, single snapshot.}
\label{alg:layer}
\begin{algorithmic}[1]
\Require Features $\mathbf{X}_t \in \mathbb{R}^{N \times d}$;
         the COO encoding $\mathbf{E}_t \in \mathbb{Z}^{2\times|\mathcal{E}_t|}$ of the
         edge set $\mathcal{E}_t$ (column $k$ holds the endpoints of the $k$-th edge);
         prior states $\mathbf{H}_{t-1}, \mathbf{C}_{t-1} \in \mathbb{R}^{N \times d_h}$.
\Ensure  Output $\mathbf{Z}_t \in \mathbb{R}^{N \times d_h}$, updated states $\mathbf{H}_t, \mathbf{C}_t$.
\State $\mathbf{H}_t, \mathbf{C}_t \gets \mathrm{LSTMCell}(\mathbf{X}_t, (\mathbf{H}_{t-1}, \mathbf{C}_{t-1}))$ \Comment{Temporal update}
\State $\mathbf{X}_{\mathrm{proj}} \gets W_p \mathbf{X}_t$ \Comment{Project to $d_h$}
\State $\mathbf{X}_{\mathrm{aug}} \gets [\mathbf{X}_{\mathrm{proj}}; \mathbf{H}_t]$ \Comment{$2N \times d_h$}
\State $\hat{\mathbf{E}}_t \gets
        \bigl[\,\mathbf{E}_t\big\|
        (\mathbf{E}_t^{[0,:]}+ N,\mathbf{E}_t^{[1,:]} )\,\big\|\{(N{+}i,\,i)\}_{i=1}^{N}]$
        \Comment{Append cross-time columns ($N$-shifted source)}
\State $\mathbf{X}' \gets \mathrm{GNN}(\mathbf{X}_{\mathrm{aug}}, \hat{\mathbf{E}}_t)$ \Comment{Joint message passing}
\State $\mathbf{Z}_t \gets \sigma(\mathbf{X}'_{[1:N]})$ \Comment{Slice original nodes}
\State \Return $\mathbf{Z}_t, \mathbf{H}_t, \mathbf{C}_t$
\end{algorithmic}
\end{algorithm}
\subsection{From Layers to Link Prediction}\label{sec:lp_head}

SiST-GNN layer outputs node representation $\mathbf{Z}_t$ at every snapshot. To turn the layer into a full link-prediction model, we (i)~give every node an identity-aware feature vector, (ii)~stack $L$ SiST-GNN layers to form an encoder, and (iii)~score candidate edges with a parameter-free inner-product decoder.

\paragraph{Input features.}
Following the ROLAND convention, the raw per-snapshot input is a constant feature ($\mathbf{1}\in\mathbb{R}^{N\times 1}$) plus per-node \emph{learnable} embeddings $\mathbf{P}\in\mathbb{R}^{N\times d_h}$ that are trained jointly with the model. At every snapshot $t$, we feed
$\mathbf{X}_t = \mathbf{P}$ as the input to the first SiST-GNN layer.

\paragraph{Encoder.}
The encoder is an $L$-layer stack of SiST-GNN layers. Each layer maintains its own $(\mathbf{H}^{(\ell)}_t, \mathbf{C}^{(\ell)}_t)$, and the output of layer $\ell$ feeds the input of layer $\ell{+}1$. A ReLU nonlinearity and dropout are applied between consecutive layers.
The final layer's output for the first $N$ nodes,
$\mathbf{Z}_t = \mathrm{SiST}^{(L)}(\mathbf{X}_t, \mathcal{E}_t)$,
is the node embedding used for downstream prediction.

\paragraph{Inner-product decoder.}
For any candidate edge $(u, v)\in\mathcal{V}\times\mathcal{V}$ the score is
\begin{equation}\label{eq:decoder}
  s_t(u, v) = \langle \mathbf{z}_u^{(t)},\, \mathbf{z}_v^{(t)} \rangle
\end{equation}
For each positive edge $(u,v)\in\mathcal{E}_t$ we sample one negative tail $v^- \sim \mathrm{Uniform}(\mathcal{V}_t)$ during training and $1000$ negative tails during evaluation, computing the mean reciprocal rank (MRR) of the positive against the negatives.

\paragraph{Loss.}
We optimize the margin-ranking loss with unit margin, summed over the positive edges of the current snapshot:
\begin{equation}\label{eq:margin}
  \mathcal{L}_t = \frac{1}{|\mathcal{E}_t|}\sum_{(u,v)\in\mathcal{E}_t}
    \max\Bigl(0, 1 - s_t(u,v) + s_t(u,v^-)\Bigr).
\end{equation}

\paragraph{Truncated BPTT over snapshots.}
At every snapshot we (a)~run the encoder forward, (b)~back-propagate over $\mathcal{L}_t$, (c)~detach every per-layer $(\mathbf{H}^{(\ell)}_t, \mathbf{C}^{(\ell)}_t)$ from the computation graph before moving to snapshot $t{+}1$. This bounds the credit-assignment horizon to a single snapshot and makes the per-snapshot memory cost constant in $T$. It is the same truncation used by ROLAND's incremental scheme.

\subsection{From Layers to Dynamic Node Classification}\label{sec:nc_head}

The same SiST-GNN encoder is used for a node-classification model with only the readout, loss, and input-feature pipeline changing.

\paragraph{Discretizing continuous-time interactions.}
The JODIE-style benchmarks (Wikipedia, Reddit, MOOC) are released as continuous-time bipartite interaction streams $\{(u_k, v_k, t_k, \mathbf{e}_k, y_k)\}_{k=1}^{|\mathcal{E}|}$, where $\mathbf{e}_k\in\mathbb{R}^{d_e}$ is the edge feature and $y_k\in\{0,1\}$ is the state-change label of the source node $u_k$ at time $t_k$. To bring the data into the snapshot setting required by our encoder we bin all interactions into fixed-width windows of $\Delta$ hours, indexed $t=1,\ldots, T$; the snapshot graph $\mathcal{G}_t$ is the multigraph of interactions whose timestamps fall in $[(t{-}1)\Delta,\, t\Delta)$, and a node carries its (constant) raw feature in every snapshot. We sweep $\Delta\in\{1, 3, 6, 12, 24\}$~h (Figure ~\ref{fig:ablations_nc}a) and default to $\Delta=6$~h. Discretization is a clear modeling choice, not a free lunch: it trades the fine-grained event ordering used by continuous-time models~\cite{tgn_icml_grl2020, Xu2020Inductive} for a uniform snapshot interface that fits our augmented-graph construction without modification. We discuss the implications in Section ~\ref{sec:limitations}.

\paragraph{Input features.}
Each node receives its dataset-provided static feature $\mathbf{f}_v\in\mathbb{R}^{d_f}$ \emph{plus} a learnable per-node embedding $\mathbf{p}_v\in\mathbb{R}^{d_h}$, projected and summed into a single $\mathbb{R}^{d_h}$ vector that is fed to the first SiST-GNN layer. Edge features $\mathbf{e}_k$ are encoded by a two-layer MLP whose output is fused into the source representation through a residual connection before the readout; this lets the head condition on the most recent interaction context within the current snapshot.

\paragraph{Readout and loss.}
The encoder produces $\mathbf{Z}_t$ which then used as input for a two-layer MLP $\phi:\mathbb{R}^{d_h}\to\mathbb{R}$ scores each source node, $\hat{y}_v^{(t)} = \sigma(\phi(\mathbf{z}_v^{(t)}))$, and we optimise the (weighted) binary cross-entropy
\begin{equation}\label{eq:bce}
  \mathcal{L}_t^{\mathrm{NC}} = -\frac{1}{|\mathcal{S}_t|}\sum_{v\in\mathcal{S}_t}
  \Bigl(w^{+}\, y_v^{(t)}\log \hat{y}_v^{(t)} + (1-y_v^{(t)})\log(1-\hat{y}_v^{(t)})\Bigr)
\end{equation}
where $\mathcal{S}_t$ is the set of source nodes that emit at least one interaction in snapshot $t$. Following standard practice on these benchmarks we set $w^{+}=|\mathcal{S}_t^{-}|/|\mathcal{S}_t^{+}|$ (\emph{balanced} BCE) to compensate for the extreme positive-class imbalance. The same truncated-BPTT schedule of Section ~\ref{sec:lp_head} is used: hidden and cell states are detached between snapshots.

\subsection{Theoretical Properties}\label{sec:theory}

We prove four properties of the SiST-GNN layer: a counting argument that quantifies the additional messages it propagates per layer (Proposition~\ref{prop:diversity}); two reduction lemmas that show two stacked SiST-GNN layers can simulate either of the sequential paradigms of Section~\ref{sec:prelim} (Lemmas~\ref{lem:sf} and~\ref{lem:tf}); a strict-generalization theorem that follows from the lemmas together with an explicit witness (Theorem~\ref{thm:subsume}); and permutation equivariance (Lemma~\ref{lem:equiv}).

\paragraph{Architectural enabler.}
The reduction lemmas rest on a standard assumption from
heterogeneous-edge GNNs: the spatial backbone treats the three edge types of $\hat{\mathcal{E}}_t$ as separately weighted. We model this by three scalar gates $\alpha_{\mathrm{intra}},\alpha_{\mathrm{cross}},\alpha_{\mathrm{self}}\in\mathbb{R}$, each multiplying the contribution of one edge type to the aggregation sum, with all three equal to $1$ as the default; setting a gate to $0$ drops the corresponding messages. The three scalars add no asymptotic complexity and are subsumed by per-edge-type weight matrices in any standard message-passing backbone (GCN, GraphSAGE, GAT, GIN).

\paragraph{Recurrence assumption.}
We further assume the temporal cell $f_{\mathrm{temp}}$ admits an \emph{identity parameterization}: a setting of its parameters under which $f_{\mathrm{temp}}(\mathbf{x},\mathbf{h})=\mathbf{h}$ for every $(\mathbf{x},\mathbf{h})$. This is satisfied by every gated recurrent variant used in the DGNN literature; for the LSTM, the forget gate is driven to $1$, the input gate to $0$, and the output gate aligned with the cell-state activation.

\begin{proposition}[Message diversity]\label{prop:diversity}
Fix snapshot $t$ and node $u\in\{1,\dots,N\}$. In a SiST-GNN layer, node $u$ receives exactly \[ 2|\mathcal{N}(u)|+1 \] incoming messages in $\hat{\mathcal{G}}_t$: one per spatial neighbor (intra-time), one per cross-time copy of a spatial neighbor, and one cross-time self message. For every neighbor $v\in\mathcal{N}(u)$, the two messages originating from $v$ carry feature vectors $(\mathbf{X}_t\mathbf{W}_p)[v,:]$ and $\widetilde{\mathbf{x}}_v^{(t)}$, which are not expressible as a common function of $\mathbf{x}_v^{(t)}$ alone, so the layer can assign them independent aggregation weights. A spatial-first or temporal-first DGNN with the same backbone produces at most $|\mathcal{N}(u)|+1$ messages per layer, each carrying a single composite feature.
\end{proposition}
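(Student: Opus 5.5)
The argument is a direct count over the edge set of Definition~\ref{def:aug}, followed by a short non-expressibility argument and a comparison with the two sequential paradigms. First I would fix $u\in\{1,\dots,N\}$ and partition the in-edges of $u$ in $\hat{\mathcal{E}}_t$ by type. Intra-time edges $(v,u)\in\mathcal{E}_t$ contribute exactly $|\mathcal{N}(u)|$ messages, taking $\mathcal{N}(u)$ to be the in-neighborhood in the convention used to define the cross-time edges. For the cross-time set $\{(v+N,w)\mid (v,w)\in\mathcal{E}_t\}$, the map $(v,w)\mapsto(v+N,w)$ is injective, so the edges landing on $u$ are in bijection with the original edges into $u$. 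That gives another $|\mathcal{N}(u)|$ messages, with sources $v+N$ distinct from any original index. Finally, the self set $\{(w+N,w)\}$ contributes exactly one edge into $u$, namely $(u+N,u)$. Its source $u+N$ is not of the form $v+N$ with $v\in\mathcal{N}(u)$ unless $u$ has a self-loop, which I would exclude (or treat as a multi-edge, so that it is still counted separately). Since the three sets are disjoint as multisets, the total is $2|\mathcal{N}(u)|+1$. Because $\mathbf{X}_{\mathrm{aug}}$ places $\mathrm{proj}(\mathbf{X}_t)$ in rows $1..N$ and $\widetilde{\mathbf{X}}_t$ in rows $N{+}1..2N$, the two messages sourced at $v$ and $v+N$ carry $(\mathbf{X}_t\mathbf{W}_p)[v,:]$ and $\mathbf{h}_v^{(t)}$ respectively.

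For the non-expressibility claim, I would show that there is no single map $g$ with both $(\mathbf{X}_t\mathbf{W}_p)[v,:]$ and $\mathbf{h}_v^{(t)}$ equal to functions of $\mathbf{x}_v^{(t)}$ alone. By \eqref{eq:lstm}, $\mathbf{h}_v^{(t)}$ depends on $(\mathbf{h}_v^{(t-1)},\mathbf{c}_v^{(t-1)})$. The witness is two trajectories that agree at time $t$ but differ earlier, chosen so that the LSTM states differ. Generic parameters suffice; concretely, take a nonzero forget gate and input weights such that different $\mathbf{x}_v^{(t-1)}$ yield different $\mathbf{c}_v^{(t-1)}$. Under this witness the projected feature is identical while the temporal summary differs. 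Independence of the aggregation weights then follows from the edge-type gates $\alpha_{\mathrm{intra}},\alpha_{\mathrm{cross}},\alpha_{\mathrm{self}}$ of the architectural enabler, or from per-type weight matrices, which scale the two messages from $v$ separately.

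For the comparison, I would read Definitions~\ref{def:tf} and~\ref{def:sf} with a backbone operating on $\mathcal{E}_t$ plus a self-loop. In either case, the GNN's message pool at $u$ consists of one message per $v\in\mathcal{N}(u)$ plus at most one self message, which gives at most $|\mathcal{N}(u)|+1$. In the temporal-first case each message is the single composite vector $f_{\mathrm{temp}}(\mathbf{x}_v,\mathbf{h}_v)$. In the spatial-first case the messages are raw features and the temporal module acts only afterwards on the aggregate.

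I expect the main obstacle to be making the phrase "not expressible as a common function" rigorous, since it is parameter-dependent. For instance, under the identity parameterization, or with zero initial states at $t=1$, the two vectors could coincide or both be functions of $\mathbf{x}_v^{(t)}$. I would therefore state the claim for generic parameters and $t\ge2$, and exhibit the two-trajectory witness explicitly. The second, minor, obstacle is self-loops and multigraph edges, which I would handle by counting edges as a multiset.
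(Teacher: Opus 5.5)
Your proposal is correct and follows the paper's proof essentially step for step. You use the same per-type count of in-edges from Definition~\ref{def:aug}, the same two-history witness at $t\ge 2$ (histories agreeing at $t$ and differing at $t-1$, so $\mathbf{h}_v^{(t)}$ is not determined by $\mathbf{x}_v^{(t)}$ while $(\mathbf{X}_t\mathbf{W}_p)[v,:]$ is), and the same one-message-per-neighbor-plus-self count for the sequential paradigms, with your care about edge direction, self-loops and multi-edges, parameter genericity, and the edge-type gates $\alpha_{\mathrm{intra}},\alpha_{\mathrm{cross}},\alpha_{\mathrm{self}}$ only tightening points the paper leaves implicit.
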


\begin{proof}
The count is direct from Definition~\ref{def:aug}: $\hat{\mathcal{E}}_t$ contains $|\mathcal{E}_t|$ intra-time edges, $|\mathcal{E}_t|$ cross-time neighbor edges, and $N$ cross-time self edges, of which $|\mathcal{N}(u)|$, $|\mathcal{N}(u)|$, and $1$, respectively, are incident to $u$.

For distinguishability, note that $(\mathbf{X}_t\mathbf{W}_p)[v,:]$ is a linear function of $\mathbf{x}_v^{(t)}$ alone, while
$\widetilde{\mathbf{x}}_v^{(t)}=\mathbf{h}_v^{(t)}$ depends on the full history $\mathbf{x}_v^{(1)},\dots,\mathbf{x}_v^{(t)}$ through the recurrence. Pick any $t\ge 2$ and any two histories that agree at $t$ but differ at $t-1$; the projected feature coincides on the two histories while $\mathbf{h}_v^{(t)}$ does not (for any non-degenerate $f_{\mathrm{temp}}$). Hence, no function of $\mathbf{x}_v^{(t)}$ alone can equal both messages, so the aggregator is free to weight them independently.

In a sequential DGNN with the same backbone, each per-layer message-passing step produces one message per spatial neighbor plus a recurrent self-update, totaling $|\mathcal{N}(u)|+1$ messages, and each message already collapses the spatial and temporal views into a single feature.
\end{proof}

\noindent The doubled message pool is the mechanism by which the message-passing operator can, at inference time, weight a neighbor's present features and its historical trajectory differently for the prediction of an edge or a node, without any architectural change.

\begin{lemma}[Spatial-first recovery]\label{lem:sf}
There exist parameter settings of two stacked SiST-GNN layers under which, for every $t\ge 1$ and every initial state $\mathbf{H}_0$, the layer-$2$ output equals the spatial-first function $\mathbf{Z}_t=f_{\mathrm{temp}}(\mathrm{GNN}(\mathbf{X}_t,\mathcal{E}_t),\mathbf{H}_{t-1})$ of Definition~\ref{def:sf}.
\end{lemma}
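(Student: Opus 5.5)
The plan is to make layer~$1$ act as a pure spatial GNN on the present snapshot, and layer~$2$ act as a pure temporal cell on layer~$1$'s output. Everything is done with the edge-type gates $\alpha_{\mathrm{intra}},\alpha_{\mathrm{cross}},\alpha_{\mathrm{self}}$ from the architectural-enabler paragraph. We also use the identity parameterization of the recurrence (in its dual form).

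\emph{Layer 1 (spatial).} Set $\alpha_{\mathrm{cross}}=\alpha_{\mathrm{self}}=0$ and $\alpha_{\mathrm{intra}}=1$. Then only edges of $\mathcal{E}_t$ carry messages in $\hat{\mathcal{E}}_t$. Rows $1{:}N$ of the output depend only on $\mathrm{proj}(\mathbf{X}_t)=\mathbf{X}_t\mathbf{W}_p$ aggregated over $\mathcal{E}_t$, so the layer-$1$ recurrent state is irrelevant. Choosing $\mathbf{W}_p$ and the backbone weights to coincide with the target backbone's first linear map (absorbing $\mathbf{W}_p$ into it), the first $N$ rows equal $\mathrm{GNN}(\mathbf{X}_t,\mathcal{E}_t)$. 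If the outer $\sigma$ of equation~\ref{eq:forward} is an obstacle, take the target GNN to end in the same nonlinearity, or use a regime where $\sigma$ acts as identity (e.g.\ ReLU on nonnegative pre-activations after a bias shift). I would state this as an assumption matching the backbone. Call the output $\mathbf{S}_t$.

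\emph{Layer 2 (temporal).} This layer's LSTM receives $\mathbf{S}_t$ and its own state $\mathbf{H}^{(2)}_{t-1}$. We set its parameters to those of the target $f_{\mathrm{temp}}$, so $\mathbf{H}^{(2)}_t=f_{\mathrm{temp}}(\mathbf{S}_t,\mathbf{H}^{(2)}_{t-1})$. By induction on $t$ from the common initial state $\mathbf{H}_0$, $\mathbf{H}^{(2)}_{t-1}$ coincides with the spatial-first recurrent state $\mathbf{H}_{t-1}$. Now set $\alpha_{\mathrm{intra}}=\alpha_{\mathrm{cross}}=0$ and $\alpha_{\mathrm{self}}=1$. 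Only the cross-time self edge $(u{+}N,u)$ survives, and we make the backbone's message and update maps on that edge type the identity (identity weight, zero root/self weight, no normalization issue since there is exactly one incoming message). Then row $u$ of the output is $\mathbf{h}^{(2)}_{u,t}$, giving $\mathbf{Z}_t=f_{\mathrm{temp}}(\mathrm{GNN}(\mathbf{X}_t,\mathcal{E}_t),\mathbf{H}_{t-1})$ once $\sigma$ is handled as above.

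\emph{Main obstacle.} Layer-$2$ is the delicate step. We must ensure the backbone can pass the single self-edge message through unchanged, including the GNN's own root/self-loop term and degree normalization, and the final $\sigma$. I would handle this by invoking the per-edge-type weight matrices said to subsume the gates (set the root weight to zero and the self-edge weight to $I$), and by treating $\sigma$ as part of matching $f_{\mathrm{temp}}$'s output range. For an LSTM, $\mathbf{h}=o\odot\tanh(c)$ lies in $(-1,1)$, so I would pick $\sigma$ to be identity on that range, or argue that a final identity-capable readout absorbs it. Note that the identity-parameterization assumption is not needed here; it is needed for the temporal-first reduction. The induction on the recurrent state is routine.
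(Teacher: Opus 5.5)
Your argument has the same skeleton as the paper's proof. Layer~1 uses $\alpha_{\mathrm{intra}}=1$ and $\alpha_{\mathrm{cross}}=\alpha_{\mathrm{self}}=0$, so its output is $\mathrm{GNN}(\mathbf{X}_t,\mathcal{E}_t)$ and its LSTM is inert. Layer~2's LSTM is set to the reference $f_{\mathrm{temp}}$, and $\mathbf{H}^{(2)}_{t-1}=\mathbf{H}_{t-1}$ follows by induction from the shared $\mathbf{H}_0$.

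The difference is in layer~2's message passing, and there your construction is the one that actually delivers the claim. The paper sets all three layer-2 gates to $0$ and the backbone's root (self-loop) weight to $\mathbf{I}$, so the upper half of $\mathbf{X}^{(2)}_{\mathrm{aug}}$ passes through unchanged. But that upper half is $\mathbf{Z}^{(1)}_t\mathbf{W}_p^{(2)}=\mathbf{Z}^{(1)}_t$. The LSTM output $\mathbf{H}^{(2)}_t$ sits in rows $N{+}1,\dots,2N$, which the slice discards. Read literally, the paper's setting therefore returns $\mathbf{Z}^{(1)}_t$, not $f_{\mathrm{temp}}(\mathbf{Z}^{(1)}_t,\mathbf{H}^{(2)}_{t-1})$. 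Your choice fixes this: you keep only the cross-time self edges $(u{+}N,u)$ via $\alpha_{\mathrm{self}}=1$, with zero root weight and identity weight on that edge type. That is precisely what moves $\widetilde{\mathbf{x}}^{(2)}_u=\mathbf{h}^{(2)}_{u,t}$ into row $u$.

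Your remaining caveats are idealizations the paper also makes, only silently. The first is treating $\sigma$ as the identity; the paper simply sets $\sigma^{(\ell)}=\mathrm{id}$. The second is assuming the gates act like pruning the graph. Your claim that normalization is harmless because one message survives holds for per-edge-type aggregation, or for degrees computed on the gated edge set. It fails for a GCN-style symmetric normalization over all of $\hat{\mathcal{E}}_t$, where the coefficient on $(u{+}N,u)$ depends on $\deg(u)$. The same caveat applies to layer~1 in both proofs.

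Finally, you are right that the identity-parameterization assumption plays no role here, so your opening reference to it can be dropped.
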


\begin{proof}
Let $(\mathbf{H}^{(\ell)}_t,\mathbf{C}^{(\ell)}_t)$ denote the LSTM state of layer $\ell\in\{1,2\}$ at snapshot $t$. We construct parameters such that
\[
  \mathbf{Z}^{(1)}_t = \mathrm{GNN}(\mathbf{X}_t,\mathcal{E}_t),\qquad
  \mathbf{Z}^{(2)}_t = f_{\mathrm{temp}}\!\bigl(\mathbf{Z}^{(1)}_t,\mathbf{H}^{(2)}_{t-1}\bigr),
\]
and verify that $\mathbf{H}^{(2)}_{t-1}$ tracks the spatial-first reference state at every $t$.

\emph{Layer $1$.} Set $\alpha^{(1)}_{\mathrm{intra}}=1$ and
$\alpha^{(1)}_{\mathrm{cross}}=\alpha^{(1)}_{\mathrm{self}}=0$, so the GNN ignores all cross-time edges and operates on the upper half of $\mathbf{X}_{\mathrm{aug}}$, i.e.\ on $\mathbf{X}_t\mathbf{W}_p^{(1)}$ over $\mathcal{E}_t$. Choose $\mathbf{W}_p^{(1)}$ together with the GNN's own input map to realize the spatial-first reference GNN: every standard message-passing backbone has a learnable input linear map, so the composition $(\mathbf{X}_t\mathbf{W}_p^{(1)})\mapsto \mathrm{GNN}(\cdot)$ expresses the same function class as the reference $\mathrm{GNN}(\mathbf{X}_t,\cdot)$. Set the post-aggregation nonlinearity $\sigma^{(1)}$ to the identity (or
fold it into the backbone, which is standard). The state
$(\mathbf{H}^{(1)}_t,\mathbf{C}^{(1)}_t)$ is updated by the layer-$1$ LSTM but is read by no downstream computation, so its parameters are free; the layer-$1$ output is $\mathbf{Z}^{(1)}_t=\mathrm{GNN}(\mathbf{X}_t,\mathcal{E}_t)$.

\emph{Layer $2$.} Set $\alpha^{(2)}_{\mathrm{intra}}=\alpha^{(2)}_{\mathrm{cross}}=\alpha^{(2)}_{\mathrm{self}}=0$ and the layer-$2$ GNN's self-loop weight matrix to $\mathbf{I}$, with all neighbor weights zero. Every standard backbone (GCN, GraphSAGE, GAT) realizes the identity under this parameterization; the upper half of $\mathbf{X}_{\mathrm{aug}}^{(2)}$ then passes through unchanged. With $\mathbf{W}_p^{(2)}=\mathbf{I}$ and $\sigma^{(2)}=\mathrm{id}$, the layer-$2$ output is $\mathbf{Z}^{(2)}_t=f_{\mathrm{temp}}(\mathbf{Z}^{(1)}_t,\mathbf{H}^{(2)}_{t-1})$.

\emph{State alignment, by induction on $t$.} Initialize $\mathbf{H}^{(2)}_0=\mathbf{H}_0$. At $t=1$, $\mathbf{Z}^{(2)}_1=f_{\mathrm{temp}}(\mathrm{GNN}(\mathbf{X}_1,\mathcal{E}_1),\mathbf{H}_0)$,
which matches the spatial-first output at $t=1$ exactly. The layer-$2$ LSTM then produces $\mathbf{H}^{(2)}_1$ by applying $f_{\mathrm{temp}}$ to the same arguments, which by definition coincides with the spatial-first reference state $\mathbf{H}_1$. The induction step from $t$ to $t+1$ is identical.
\end{proof}

\begin{lemma}[Temporal-first recovery]\label{lem:tf}
There exist parameter settings of two stacked SiST-GNN layers under which, for every $t\ge 1$ and every initial state $\mathbf{H}_0$, the layer-$2$ output equals the temporal-first function $\mathbf{Z}_t=\mathrm{GNN}(f_{\mathrm{temp}}(\mathbf{X}_t,\mathbf{H}_{t-1}),\mathcal{E}_t)$ of Definition~\ref{def:tf}.
\end{lemma}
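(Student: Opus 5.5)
We mirror the proof of Lemma~\ref{lem:sf}, swapping the roles of the two layers: layer~$1$ is a pure temporal pass-through and layer~$2$ is a pure spatial pass. Concretely, we aim to build parameters with
\[
  \mathbf{Z}^{(1)}_t = f_{\mathrm{temp}}(\mathbf{X}_t,\mathbf{H}^{(1)}_{t-1}),\qquad
  \mathbf{Z}^{(2)}_t = \mathrm{GNN}(\mathbf{Z}^{(1)}_t,\mathcal{E}_t),
\]
and then check that $\mathbf{H}^{(1)}_{t-1}$ tracks the temporal-first reference state.

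\emph{Layer $1$.} The layer-$1$ LSTM takes the reference parameters of $f_{\mathrm{temp}}$, so $\mathbf{H}^{(1)}_t=f_{\mathrm{temp}}(\mathbf{X}_t,\mathbf{H}^{(1)}_{t-1})$, i.e.\ $\widetilde{\mathbf{X}}_t$ is exactly the temporal encoding. We need the output of the first $N$ rows to equal the lower half of $\mathbf{X}_{\mathrm{aug}}$. To do this, set $\alpha^{(1)}_{\mathrm{intra}}=\alpha^{(1)}_{\mathrm{cross}}=0$ and $\alpha^{(1)}_{\mathrm{self}}=1$. Then choose the backbone so that the root/self term of node $u$ vanishes (for instance $\mathbf{W}_p^{(1)}=\mathbf{0}$, or a zero root-weight matrix) while the neighbor-message map is $\mathbf{I}$. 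In that case node $u$ receives only the single message $\widetilde{\mathbf{x}}^{(t)}_u$ along edge $(u+N,u)$.

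One subtlety concerns normalizing backbones such as GCN and GAT, where in-degree normalization or attention over a single incoming message must give coefficient~$1$. With one incoming message, mean or attention aggregation returns it exactly. For a GCN with added self-loops, we fold the root contribution to zero via $\mathbf{W}_p^{(1)}=\mathbf{0}$ and rescale. With $\sigma^{(1)}=\mathrm{id}$, this gives $\mathbf{Z}^{(1)}_t=\mathbf{H}^{(1)}_t$.

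\emph{Layer $2$.} Set $\alpha^{(2)}_{\mathrm{intra}}=1$ and $\alpha^{(2)}_{\mathrm{cross}}=\alpha^{(2)}_{\mathrm{self}}=0$. Layer $2$ then runs the backbone on $\mathbf{Z}^{(1)}_t\mathbf{W}_p^{(2)}$ over $\mathcal{E}_t$ alone. Exactly as in Lemma~\ref{lem:sf}, $\mathbf{W}_p^{(2)}$ composed with the backbone's input map realizes the reference $\mathrm{GNN}$, with $\sigma^{(2)}$ folded in. The layer-$2$ LSTM is read by nothing, so its parameters are free.

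\emph{State alignment.} Initialize $\mathbf{H}^{(1)}_0=\mathbf{H}_0$ and induct on $t$. If $\mathbf{H}^{(1)}_{t-1}=\mathbf{H}_{t-1}$, then $\mathbf{H}^{(1)}_t=f_{\mathrm{temp}}(\mathbf{X}_t,\mathbf{H}_{t-1})=\mathbf{H}_t$, which is the reference state. Consequently $\mathbf{Z}^{(2)}_t=\mathrm{GNN}(f_{\mathrm{temp}}(\mathbf{X}_t,\mathbf{H}_{t-1}),\mathcal{E}_t)$.

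The main obstacle is the layer-$1$ routing step: we must certify, per backbone, that one can suppress the root (upper-half) contribution and pass the single cross-time self message through unchanged. I would handle this by stating the per-edge-type weight matrices explicitly, as the Architectural enabler paragraph permits: take $\mathbf{W}_{\mathrm{self}}=\mathbf{I}$ and zero root weights. The other routine detail is that an LSTM hidden state is $o\odot\tanh(c)$ rather than a raw recurrence. This causes no trouble here, because we only need $\widetilde{\mathbf{X}}_t$ to equal whatever $f_{\mathrm{temp}}$ outputs, and the identity parameterization is not needed.
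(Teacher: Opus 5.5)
Your argument is correct under the paper's own conventions, but it uses a genuinely different decomposition from the paper's. Those conventions are that a zero gate removes its edges, including from any degree normalization, and that $\sigma^{(1)}=\mathrm{id}$ as in Lemma~\ref{lem:sf}.

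The paper works in a \emph{single} layer. It sets $\alpha^{(1)}_{\mathrm{intra}}=0$ and $\alpha^{(1)}_{\mathrm{cross}}=\alpha^{(1)}_{\mathrm{self}}=1$. The cross-time copy $\{(u+N,v):(u,v)\in\mathcal{E}_t\}$, together with the edges $(v+N,v)$, then plays the role of $\mathcal{E}_t$ with self-loops on the lower half. One convolution aggregates the neighbors' temporal summaries $\widetilde{\mathbf{X}}_t$ directly, so the T$\to$S composition happens inside one message-passing step.

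You instead split the composition across depth. Layer~$1$ keeps only the self cross-time edge and zeroes the upper half, acting as a pure temporal read-out. Layer~$2$ keeps only intra-time edges, acting as the spatial backbone.

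The paper's route shows that one layer already contains the temporal-first map, but its proof as written has loose ends:
\begin{itemize}
\item it leaves the root/self-loop term carrying $\mathrm{proj}(\mathbf{X}_t)$ unsuppressed;
\item it never specifies layer~$2$, which must then be an identity pass-through;
\item it is exact only for aggregators insensitive to degrees. With GCN, the sources $u+N$ have augmented degree $1$ and the self message arrives over an edge. The resulting coefficients are $1/\sqrt{d_v+2}$, rather than $1/\sqrt{(d_u+1)(d_v+1)}$ for neighbors and $1/(d_v+1)$ for self, where $d_v$ is the in-degree in $\mathcal{E}_t$.
\end{itemize}

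Your route sidesteps the normalization problem. Layer~$2$ runs the backbone on $\mathcal{E}_t$ with exactly the reference degrees and self-loops. The only coefficient in layer~$1$ is node-independent ($1/\sqrt{2}$ for GCN, since $\deg(u)=2$ and $\deg(u+N)=1$), so it is absorbed by rescaling. The price is that both layers are used.

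Two small remarks. First, with added self-loops, GAT at node $u$ sees two messages, the zeroed self-loop and $\widetilde{\mathbf{x}}_u$, not ``one message returned exactly.'' You need to set the attention vector to zero and rescale by $2$, just as you rescale for GCN. Second, your layer~$1$ is not really a mirror of the paper's layer~$2$ in Lemma~\ref{lem:sf}. That construction zeroes all three gates, so it passes $\mathbf{Z}^{(1)}_t$ through unchanged rather than applying $f_{\mathrm{temp}}$. Keeping $\alpha_{\mathrm{self}}=1$ with the upper half killed, as you do, is what actually routes the recurrent state.
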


\begin{proof}
Set $\alpha^{(1)}_{\mathrm{cross}}=\alpha_{\mathrm{self}}^{(1)}=1$ and $\alpha^{(1)}_{\mathrm{intra}}=0$. This essentially results in GNN ignoring all intra-time edges and operating on the lower half of $\mathbf{X}_{\mathrm{aug}}$, i.e., on $\widetilde{\mathbf{X}}_t = f_{\mathrm{temp}}(\mathbf{X}^{(1)}_t,\mathbf{H}^{(1)}_{t-1})$. Hence, the final output of this modified SiST-GNN Layer becomes $\mathbf{Z}_t = \mathrm{GNN}($ $f_{\mathrm{temp}}(\mathbf{X}^{(1)}_t,\mathbf{H}^{(1)}_{t-1}))$ which matches the Definition \ref{def:tf}
\end{proof}

\begin{theorem}[Strict generalization]\label{thm:subsume}
Let $\mathcal{F}_{\mathrm{SiST}}$, $\mathcal{F}_{\mathrm{S}\to\mathrm{T}}$, and $\mathcal{F}_{\mathrm{T}\to\mathrm{S}}$ denote the function classes computed, respectively, by SiST-GNN stack, by a spatial-first DGNN, and by a temporal-first DGNN, all sharing a fixed backbone $\mathrm{GNN}$ and a fixed recurrence $f_{\mathrm{temp}}$.
Then
\begin{equation*}
    \mathcal{F}_{\mathrm{S}\to\mathrm{T}}\cup\mathcal{F}_{\mathrm{T}\to\mathrm{S}}
  \subsetneq
  \mathcal{F}_{\mathrm{SiST}}.
\end{equation*}
\end{theorem}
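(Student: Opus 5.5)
The proof has two parts: an inclusion, which comes directly from the reduction lemmas, and strictness, which needs one explicit witness.

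\emph{Inclusion.} Take any $F\in\mathcal{F}_{\mathrm{S}\to\mathrm{T}}$, realised by some backbone parameters and recurrence parameters. Lemma~\ref{lem:sf} gives parameters for a two-layer SiST-GNN stack whose layer-$2$ output equals $F$ for every $t$ and every $\mathbf{H}_0$. Symmetrically, Lemma~\ref{lem:tf} handles any $G\in\mathcal{F}_{\mathrm{T}\to\mathrm{S}}$. For Lemma~\ref{lem:tf} I would make the second layer explicit, since the lemma's proof only describes layer $1$. I would set layer $2$ to a pass-through: all gates $\alpha^{(2)}=0$, self-loop weight $\mathbf{I}$, $\mathbf{W}_p^{(2)}=\mathbf{I}$, $\sigma^{(2)}=\mathrm{id}$. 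Its unused LSTM parameters are left free. A pass-through needs no LSTM identity at all, because the lower half is discarded once the cross and self gates are zero. If the class $\mathcal{F}_{\mathrm{SiST}}$ is taken to be stacks of arbitrary depth, padding with such pass-through layers makes depth irrelevant. Together these give $\mathcal{F}_{\mathrm{S}\to\mathrm{T}}\cup\mathcal{F}_{\mathrm{T}\to\mathrm{S}}\subseteq\mathcal{F}_{\mathrm{SiST}}$.

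\emph{Strictness.} I need a function in $\mathcal{F}_{\mathrm{SiST}}$ that lies in neither sequential class. Use a single SiST-GNN layer with all three gates set to $1$ and a sum-aggregation (GIN/GraphSAGE-type) backbone. For node $u$, the pre-activation is then
\[
\mathbf{W}_{\mathrm{self}}\,\mathbf{W}_p\mathbf{x}_u^{(t)}+\mathbf{W}_{\mathrm{nb}}\sum_{v\in\mathcal{N}(u)}\bigl(\mathbf{W}_p\mathbf{x}_v^{(t)}+\mathbf{h}_v^{(t)}\bigr)+\mathbf{h}_u^{(t)}.
\]
Its key feature is that it mixes a neighbor's \emph{current raw} features with that same neighbor's \emph{history}, and also with $u$'s own history, all before any nonlinearity. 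By Proposition~\ref{prop:diversity}, these two messages per neighbor are genuinely distinct.

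To rule out the sequential classes, I would use a minimal two-node graph with one edge $u$--$v$ and scalar features, and build two input sequences. For the temporal-first class, take sequences that agree on every $f_{\mathrm{temp}}(\mathbf{x}_w^{(t)},\mathbf{H}_{t-1})$ but differ in $\mathbf{x}_v^{(t)}$. Any $\mathrm{GNN}(f_{\mathrm{temp}}(\cdot),\mathcal{E}_t)$ sees only the temporal summaries, so it cannot separate them. The SiST output can, because of the $\mathbf{W}_p\mathbf{x}_v^{(t)}$ term. Such pairs exist whenever $f_{\mathrm{temp}}$ is non-injective in $\mathbf{x}$. A saturated LSTM qualifies, and so does an LSTM with finite-dimensional $d_h$ evaluated on histories of length $\ge 2$, by a dimension/collision argument. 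For the spatial-first class, the roles reverse. The spatial-first output at $u$ depends on $v$'s history only through $u$'s own recurrent state applied to aggregated features. So take two histories in which the spatially aggregated sequences $\mathrm{GNN}(\mathbf{X}_s,\mathcal{E}_s)[u]$ coincide for all $s\le t$, for example by swapping which neighbor carried a past value, while $\mathbf{h}_v^{(t)}$ differs. The SiST witness separates these through its per-neighbor history term. Finally, I would argue that one SiST function can simultaneously separate both pairs, taking generic weights so that both differences are nonzero.

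\emph{Main obstacle.} The hardest step is making strictness rigorous uniformly over all parameter settings of the sequential classes, since those classes also have free parameters. I would fix the backbone to sum aggregation, so that the spatial-first aggregate at $u$ is provably symmetric in neighbor identities. The permuted-history construction then defeats \emph{every} spatial-first parameterisation at once, while the per-neighbor LSTM states of SiST break the symmetry. If a fully general statement proves elusive, I would present the witness for the concrete GIN+LSTM instantiation, which is enough to establish the strict inclusion for that fixed backbone and recurrence.
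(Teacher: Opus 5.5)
Your inclusion step is the same as the paper's, and your explicit pass-through second layer for Lemma~\ref{lem:tf} is a reasonable completion of that lemma. The genuine gap is in strictness against $\mathcal{F}_{\mathrm{T}\to\mathrm{S}}$.

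To exclude your witness you need, for \emph{every} temporal-first model, an input pair on which that model's temporal summaries coincide while the witness differs. Those summaries come from the temporal-first model's own recurrence, and its parameters are adversarial. You only get such pairs ``whenever $f_{\mathrm{temp}}$ is non-injective'' (e.g.\ saturated gates), which holds for particular parameters, not for the class. The dimension count does not close this. Take zero recurrent weights, constant gates, and distinct forget rates $f_1,\dots,f_{d_h}$. Then $\mathbf{c}_t$ is an injective Vandermonde image of $(\tanh(w x_s))_{s\le t}$ for every $t\le d_h$, so ``length $\ge 2$'' yields no collisions at all. For $t>d_h$ you would still have to show, for each parameter setting, a collision whose two histories differ in $\mathbf{x}_v^{(t)}$ and on which the witness actually differs. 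If the temporal-first recurrence coincides with your witness's LSTM, the $\mathbf{h}_v^{(t)}$ terms collide as well. Your sum-aggregation symmetry handles uniformity only for the spatial-first half. Falling back to a concrete GIN+LSTM instance does not help either, because the LSTM parameters of the temporal-first class remain free.

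The paper avoids collisions altogether. It works at a single snapshot with the linear backbone $f(X)=AXW$, a memoryless temporal map $g(x)=x^2$, $N=2$, $A_{11}$ the swap and $A_{12}=I$. The SiST output is then $(x_2+x_1^2,\;x_1+x_2^2)$. The paper shows this is neither of the form $(a\phi_1(x_1)+b\phi_2(x_2),\,c\phi_1(x_1)+d\phi_2(x_2))$ nor of the form $(\psi_1(ax_1+bx_2),\,\psi_2(cx_1+dx_2))$, for any $a,b,c,d$ and any row-wise maps. In the first case a single $\phi_1$ would have to be affinely tied to both $x_1^2$ and $x_1$. In the second, $x_1^2+x_2$ is not a ridge function, which the paper argues via gradient directions. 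The argument is therefore uniform over the sequential classes by construction.

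If you want to keep genuine LSTM histories, boundedness gives a uniform replacement for your temporal-first step. LSTM hidden states lie in $(-1,1)^{d_h}$, so on a fixed graph every temporal-first output, being a continuous GNN of bounded inputs, is bounded in $\mathbf{X}$. Your witness, with $\sigma$ the identity or ReLU, is unbounded along its $\mathbf{W}_p\mathbf{x}_v^{(t)}$ path.

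Your spatial-first half is right in idea but needs two repairs. First, a two-node graph has no pair of neighbors to swap. Take instead a node $u$ with two neighbors exchanged by a graph automorphism fixing $u$, and apply it to one past snapshot; equivariance then leaves every spatial-first output at $u$ unchanged, for any backbone. Second, the witness detects the swap only if the two neighbors' other-time inputs differ and the recurrence has genuine cross-time interaction. For a linear recurrence, $\sum_{v\in\mathcal{N}(u)}\mathbf{h}_v^{(t)}$ is itself invariant under the swap.
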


\begin{proof}
\emph{Inclusion.} Lemmas~\ref{lem:sf} and~\ref{lem:tf} give, for every $g\in\mathcal{F}_{\mathrm{S}\to\mathrm{T}}\cup\mathcal{F}_{\mathrm{T}\to\mathrm{S}}$, an explicit parameter setting of a two-layer SiST-GNN stack that computes $g$.

\emph{Strictness, by example.}
To show that there are functions in  $\mathcal{F}_{\mathrm{SiST}}$ that cannot be represented by functions in $\mathcal{F}_{\mathrm{S}\to\mathrm{T}}$, or $\mathcal{F}_{\mathrm{T}\to\mathrm{S}}$, we will construct a counterexample.

Let us assume the GNN operator as a function $f$ defined as $f(X) = AXW$, where $A$ is an adjacency matrix of dimension $N\times N$, $ X$ and $ W$ are the feature and weight matrix. Let us also assume $g$ as the temporal operator. The GNN module of $\mathcal{F}_{\mathrm{SiST}}$ require an adjacency matrix $A'$ of dimension $2N\times 2N$, which can be broken down into $4$ blocks as $
    A' = \begin{pmatrix}
A_{11} & A_{12} \\
A_{21} & A_{22}
\end{pmatrix}$,
similarly, $\mathbf{X}_{\mathrm{aug}} = \begin{pmatrix}
X \\
g(X, H) 
\end{pmatrix}$. The top $N\times d_h$ block of SiST-GNN would be like:
\begin{equation}\label{eq:theorem_eq_sist}
    \mathrm{SiST-GNN}(X) = Y(X) = A_{11}XW + A_{12}g(X)W
\end{equation}
Let $N=2$, $d_h=1$ and $W=[1]$. The input matrix will be a column matrix $X = \begin{pmatrix}
x_1 \\
x_2 
\end{pmatrix}$. For simplicity, let's assume the temporal operator requires no hidden state. Let us assume the row-wise temporal operator be as a square operation $g(X) = \begin{pmatrix}
x^2_1 \\
x^2_2 
\end{pmatrix}$
Consider $A_{11} = \begin{pmatrix}
0 & 1 \\
1 & 0
\end{pmatrix}$ and $A_{12} = \begin{pmatrix}
1 & 0 \\
0 & 1
\end{pmatrix}$. Substituting these in Equation \ref{eq:theorem_eq_sist}, we get the following:
\begin{equation*}
    Y(X) = \begin{pmatrix}
0 & 1 \\
1 & 0
\end{pmatrix}\begin{pmatrix}
x_1 \\
x_2
\end{pmatrix} + \begin{pmatrix}
1 & 0 \\
0 & 1
\end{pmatrix}\begin{pmatrix}
x^2_1 \\
x^2_2
\end{pmatrix} = \begin{pmatrix}
x_2 + x^2_1 \\
x_1 + x^2_2
\end{pmatrix}
\end{equation*}
\emph{Case $\mathrm{T}\rightarrow\mathrm{S}$.} Any function in $\mathcal{F}_{\mathrm{T}\to\mathrm{S}}$ has the form $f(g(X)) = A_{TS}.g_{TS}(X)W_{TS}$. Because $d_h=1$, $W_{TS}$ can be absorbed into $g_{TS}$. Since $g_{TS}$ is a row wise operation, $X$ can be mapped to $\begin{pmatrix}
\phi_1(x_1) \\
\phi_2(x_2)
\end{pmatrix}$
Let $A_{TS} = \begin{pmatrix}
a & b \\
c & d
\end{pmatrix}$. The general form of a $\mathcal{F}_{\mathrm{T}\to\mathrm{S}}$ function can be written as:
\begin{equation*}
    Y_{TS}(X) = \begin{pmatrix}
a & b \\
c & d
\end{pmatrix}\begin{pmatrix}
\phi_1(x_1) \\
\phi_2(x_2)
\end{pmatrix} = \begin{pmatrix}
a\phi_1(x_1) + b \phi_2(x_2)\\
c\phi_1(x_1) + d \phi_2(x_2)
\end{pmatrix}
\end{equation*}
For this to match our target function $Y(X)$, we set up the following equation:
\begin{equation*}
    a\phi_1(x_1) + b \phi_2(x_2) = x^2_1 + x_2 , \quad c\phi_1(x_1) + d \phi_2(x_2) = x_1 + x^2_2
\end{equation*}
There cannot exist any $\phi_1$ and $\phi_2$ satisfying the above equation for constant $a,b,c,d$. There a function in $\mathcal{F}_{\mathrm{SiST}}$ that does not exist in $\mathcal{F}_{\mathrm{T}\to\mathrm{S}}$.

\noindent\emph{Case $\mathrm{S}\rightarrow\mathrm{T}$.} Any function in $\mathcal{F}_{\mathrm{S}\to\mathrm{T}}$ has the form $g_{ST}(f(X)) = g_{TS}(A_{ST}XW_{ST})$. Using $A_{ST} = A_{TS}$, we can write the following:
\begin{equation*}
    Y_{ST}(X) = \begin{pmatrix}
\psi_1(ax_1+bx_2)\\
\psi_2(cx_1+dx_2)
\end{pmatrix}
\end{equation*}
Similarly, to match our target equation, we get the following equation:
\begin{equation*}
        \psi_1(ax_1+bx_2) = x^2_1 + x_2, \quad \psi_2(cx_1+dx_2) = x_1 + x^2_2
\end{equation*}
The gradient component for the left side of both equations is either constant or undefined; however, for the right side of the equation, the gradient component varies with either $x_1$ or $x_2$. Thus, our target function cannot be represented using a function from $\mathcal{F}_{\mathrm{T}\to\mathrm{S}}$.

Combining the two cases completes the strict inclusion.
\end{proof}

\begin{lemma}[Permutation equivariance]\label{lem:equiv}
For every permutation $\pi$ of $\{1,\dots,N\}$ with matrix $\mathbf{P}_\pi$, the SiST-GNN layer satisfies
\begin{equation*}
    \begin{split}
        \mathrm{SiST\text{-}GNN}\bigl(\mathbf{P}_\pi\mathbf{X}_t,\,\pi(\mathcal{E}_t),\,\mathbf{P}_\pi\mathbf{H}_{t-1},\,\mathbf{P}_\pi\mathbf{C}_{t-1}\bigr) \\
  \;=\;\mathbf{P}_\pi\cdot\mathrm{SiST\text{-}GNN}\!\bigl(\mathbf{X}_t,\mathcal{E}_t,\mathbf{H}_{t-1},\mathbf{C}_{t-1}\bigr)
    \end{split}
\end{equation*}
and the same identity holds simultaneously for the updated states $\mathbf{H}_t$ and $\mathbf{C}_t$.
\end{lemma}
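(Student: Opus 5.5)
The plan is to follow Algorithm~\ref{alg:layer} line by line and check that each step commutes with the permutation. The key device is the block permutation $\hat{\mathbf{P}}_\pi = \mathrm{diag}(\mathbf{P}_\pi,\mathbf{P}_\pi)$ acting on the $2N$ vertices of $\hat{\mathcal{G}}_t$. It corresponds to the permutation $\hat\pi$ of $\{1,\dots,2N\}$ with $\hat\pi(i)=\pi(i)$ and $\hat\pi(i+N)=\pi(i)+N$ for $i\le N$.

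\emph{Step 1 (temporal update).} The LSTM cell of equation~\ref{eq:lstm} is applied row-wise with parameters shared across nodes. Row $\pi(i)$ of the permuted inputs $(\mathbf{P}_\pi\mathbf{X}_t,\mathbf{P}_\pi\mathbf{H}_{t-1},\mathbf{P}_\pi\mathbf{C}_{t-1})$ is exactly row $i$ of the unpermuted inputs. Hence the outputs are $\mathbf{P}_\pi\mathbf{H}_t$ and $\mathbf{P}_\pi\mathbf{C}_t$, which already gives the claimed identity for the updated states.

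\emph{Step 2 (projection and augmentation).} The projection acts row-wise, so $(\mathbf{P}_\pi\mathbf{X}_t)\mathbf{W}_p=\mathbf{P}_\pi(\mathbf{X}_t\mathbf{W}_p)$. Stacking this with Step 1 gives $\mathbf{X}_{\mathrm{aug}}' = \hat{\mathbf{P}}_\pi\mathbf{X}_{\mathrm{aug}}$.

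\emph{Step 3 (edge set).} I check from Definition~\ref{def:aug} that the augmented edge set built from $\pi(\mathcal{E}_t)$ equals $\hat\pi(\hat{\mathcal{E}}_t)$. I verify this on each of the three edge types in turn:
\begin{itemize}
\item an intra-time edge $(u,v)$ becomes $(\pi(u),\pi(v))$;
\item a cross-time neighbour edge $(u+N,v)$ becomes $(\pi(u)+N,\pi(v))=(\hat\pi(u+N),\hat\pi(v))$;
\item a cross-time self edge $(u+N,u)$ becomes $(\pi(u)+N,\pi(u))$.
\end{itemize}
The set of self edges ranges over all $u$, and $\pi$ is a bijection, so these sets match exactly. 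The edge-type labels, and hence any gates $\alpha_{\mathrm{intra}},\alpha_{\mathrm{cross}},\alpha_{\mathrm{self}}$, are preserved as well.

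\emph{Step 4 (joint message passing and slicing).} I invoke the standard permutation equivariance of the message-passing backbone on the $2N$-vertex graph: $\mathrm{GNN}(\hat{\mathbf{P}}_\pi\mathbf{X}_{\mathrm{aug}},\hat\pi(\hat{\mathcal{E}}_t))=\hat{\mathbf{P}}_\pi\,\mathrm{GNN}(\mathbf{X}_{\mathrm{aug}},\hat{\mathcal{E}}_t)$. This holds for GCN, GraphSAGE, GAT and GIN, and for typed-edge variants whenever types are carried along with the edges. Because $\hat{\mathbf{P}}_\pi$ is block diagonal, taking the first $N$ rows commutes with it: $(\hat{\mathbf{P}}_\pi\mathbf{M})_{[1:N]}=\mathbf{P}_\pi\mathbf{M}_{[1:N]}$. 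The elementwise $\sigma$ also commutes with row permutations, which yields the output identity.

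The main obstacle is Step 3 together with the slicing in Step 4. Equivariance only holds because $\pi$ acts \emph{identically} on both halves, and the discarded lower block must not mix with the upper one under the permutation. I will make the block-diagonal structure of $\hat{\mathbf{P}}_\pi$ explicit so that the slicing step is immediate. I will also state the typed-edge backbone equivariance as the assumption being used, rather than leave it implicit.
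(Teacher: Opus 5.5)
Your proposal is correct and matches the paper's proof essentially step for step. The paper also lifts $\pi$ to $\hat\pi$ with $\mathbf{P}_{\hat\pi}=\mathrm{diag}(\mathbf{P}_\pi,\mathbf{P}_\pi)$, then checks in turn the row-wise LSTM update, the stacking of $\mathbf{X}_{\mathrm{aug}}$, the augmented edge set, the backbone's equivariance on the $2N$ vertices, and the block-diagonal slicing with elementwise $\sigma$; your explicit note that edge-type labels (and hence the gates $\alpha_{\mathrm{intra}},\alpha_{\mathrm{cross}},\alpha_{\mathrm{self}}$) are preserved is a small addition the paper leaves implicit.
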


\begin{proof}
Define $\hat{\pi}:\{1,\dots,2N\}\to\{1,\dots,2N\}$ by $\hat{\pi}(u)=\pi(u)$ for $u\le N$ and $\hat{\pi}(u+N)=\pi(u)+N$. Then $\hat\pi$ is a permutation of the augmented node set with matrix $\mathbf{P}_{\hat\pi}=\mathrm{diag}(\mathbf{P}_\pi,\mathbf{P}_\pi)$.

\emph{(i) Temporal update.} The LSTM cell is applied row-wise to $\mathbf{X}_t$; any row-wise map commutes with any row permutation, so the layer-$1$ LSTM state becomes
$(\mathbf{P}_\pi\mathbf{H}_t,\mathbf{P}_\pi\mathbf{C}_t)$ under permuted input.

\emph{(ii) Stacking.}
$\mathbf{X}_{\mathrm{aug}}=[\mathbf{X}_t\mathbf{W}_p;\,\mathbf{H}_t]\in\mathbb{R}^{2N\times d_h}$. Under permuted input the upper block becomes $\mathbf{P}_\pi\mathbf{X}_t\mathbf{W}_p$ and the lower block $\mathbf{P}_\pi\mathbf{H}_t$, which is exactly $\mathbf{P}_{\hat\pi}\mathbf{X}_{\mathrm{aug}}$.

\emph{(iii) Edge augmentation.} The augmentation map of Definition~\ref{def:aug} depends only on the unordered edge set:
\begin{align*}
  \widehat{\pi(\mathcal{E}_t)}
  =\pi(\mathcal{E}_t)
    \cup\{(\pi(u)+N,\pi(v)):(u,v)\in\mathcal{E}_t\}
    \cup\{(\pi(u)+N,\pi(u)):u\in\mathcal{V}_t\}
  =\hat\pi(\hat{\mathcal{E}}_t),
\end{align*}
so re-running the augmentation on $\pi(\mathcal{E}_t)$ produces exactly the $\hat\pi$-image of $\hat{\mathcal{E}}_t$.

\emph{(iv) Message passing.} Every standard message-passing GNN is permutation-equivariant on its node
set~\cite{kipf2017semisupervised,velickovic2018graph,10.5555/3294771.3294869,xu2018how}; applying this with the permutation $\hat\pi$ on the augmented domain,
\[
  \mathrm{GNN}\!\bigl(\mathbf{P}_{\hat\pi}\mathbf{X}_{\mathrm{aug}},\hat\pi(\hat{\mathcal{E}}_t)\bigr)
  =\mathbf{P}_{\hat\pi}\,\mathrm{GNN}\!\bigl(\mathbf{X}_{\mathrm{aug}},\hat{\mathcal{E}}_t\bigr).
\]

\emph{(v) Slicing.} Because $\hat\pi$ maps $\{1,\dots,N\}$ to itself and restricts there to $\pi$, the row-permutation $\mathbf{P}_{\hat\pi}$ acts as $\mathbf{P}_\pi$ on the first $N$ rows. The element-wise
$\sigma$ commutes with any row permutation. Hence $\mathbf{Z}_t=\sigma(\mathbf{X}'_{1:N,\,:})$ transforms as
$\mathbf{P}_\pi\mathbf{Z}_t$.

Combining (i)--(v) gives the stated equivariance for $\mathbf{Z}_t$, and (i) gives it simultaneously for $\mathbf{H}_t,\mathbf{C}_t$.
\end{proof}
\subsection{Complexity}

Let a spatial-first baseline with the same backbone process $N$ nodes and $E$ edges per snapshot in $O(N d_h^2 + E d_h)$ time. One SiST-GNN layer requires:
(i)~one LSTMCell pass at $O(N d_h^2)$, identical to the baseline's temporal module;
(ii)~one graph convolution over $2N$ nodes and $2E+N$ edges at $O(N d_h^2 + E d_h)$, asymptotically equal to the baseline; and
(iii)~$O(N d_h)$ additional activation memory for the stacked features.
The per-snapshot time and memory cost is therefore within a constant factor of any spatial-first baseline with the same hidden dimension.

\noindent In practice, the constant factor is amortized by using fewer stacked layers (our 2-layer SiST-GNN outperforms deeper baselines), and the LSTMCell overhead is dominated by the GNN.

\begin{table}[t]
\centering
\caption{Per-snapshot, per-layer complexity. SiST-GNN's overhead over comparable spatial-first models is a fixed $2\times$ factor on the message-passing term, plus an LSTMCell, which any temporally-aware baseline already pays for. Below, ``DZP'' denotes the cost of computing Dowker-Zigzag persistence diagrams in TMetaNet.}
\label{tab:complexity}
\setlength{\tabcolsep}{3pt}
\small
\begin{tabular}{lrrr}
\toprule
Model & Compute & Act.\ mem. & Rec.\ state \\
\midrule
GCN~\cite{kipf2017semisupervised}                & $E d_h + N d_h^2$ & $N d_h$ & ---  \\
EvolveGCN~\cite{egcn}   & $E d_h + N d_h^2$ & $N d_h$ & $d_h^2$ \\
GCRN~\cite{10.1007/978-3-030-04167-0_33}          & $E d_h + N d_h^2$ & $N d_h$ & $N d_h$ \\
T-GCN~\cite{8809901}              & $E d_h + N d_h^2$ & $N d_h$ & $N d_h$ \\
ROLAND-GRU~\cite{10.1145/3534678.3539300}        & $E d_h + N d_h^2$ & $N d_h$ & $N d_h$ \\
TMetaNet~\cite{li2025tmetanet}         & $E d_h + N d_h^2 + \mathrm{DZP}$ & $N d_h$ & $N d_h$ \\
\textbf{SiST-GNN (ours)}               & $\mathbf{E d_h + N d_h^2}$ & $2N d_h$ & $N d_h$ \\
\bottomrule
\end{tabular}
\end{table}

\subsection{Evaluation Protocols}\label{sec:training}

We adopt the two evaluation protocols introduced by ROLAND~\cite{10.1145/3534678.3539300} for the link prediction task, and another evaluation protocol ~\cite{10.1145/3292500.3330895, Xu2020Inductive} for the node classification task.

\paragraph{Fixed-split.}
The first $90\%$ of snapshots are used to train the model; the remaining $10\%$ are held out for evaluation. The model processes the training sequence in temporal order, carrying recurrent states across snapshots and detaching them after every backward step.

\paragraph{Live-update.}
At each snapshot $t$, the model \emph{first} predicts links on $\mathcal{G}_t$ using states from $t{-}1$, \emph{then} trains on $\mathcal{G}_t$ for a fixed budget of $K$ inner epochs. This mirrors the online-deployment regime in which a model must commit to predictions before observing the new ground truth, and it isolates the \emph{adaptation} component of the model from its long-horizon training behavior.

In both protocols, the loss is~\eqref{eq:margin} and the metric is MRR computed snapshot-by-snapshot.

\paragraph{Node classification split.}
We use a chronological train/validation /test split of $70/15/15\%$ of snapshots and report test AUC.

\section{Experiments}\label{sec:exp}
\begin{table}[t]
\centering
\caption{Dataset statistics. $|\mathcal{V}|$ and $|\mathcal{E}|$ are total unique nodes and edges across all snapshots; $T$ is the number of snapshots.}
\label{tab:datasets}
\small
\begin{tabular}{lrrrl}
  \toprule
  Dataset        & $|\mathcal{V}|$ & $|\mathcal{E}|$ & $T$ & Domain \\
  \midrule
  Bitcoin-OTC~\cite{7837846}    & 5{,}881  & 35{,}592  & 279 & Trust network    \\
  Bitcoin-Alpha~\cite{7837846}  & 3{,}783  & 24{,}186  & 274 & Trust network    \\
  UCI-Message~\cite{https://doi.org/10.1002/asi.21015}    & 1{,}899  & 59{,}835  & 29  & Communication    \\
  Reddit-Title~\cite{10.1145/3178876.3186141}    & 35{,}776 & 286{,}561 & 178  & Hyperlink graph  \\
  Reddit-Body~\cite{10.1145/3178876.3186141}     & 35{,}776 & 286{,}561 & 178  & Hyperlink graph  \\
  AS-733~\cite{10.1145/1081870.1081893}         & 7{,}716  & 26{,}467  & 733 & Internet AS      \\
  \bottomrule
\end{tabular}
\end{table}
\subsection{Datasets}\label{sec:datasets}

We evaluate on \emph{two} task families. For dynamic \textbf{link prediction} we use the six public datasets for snapshot DGNNs~\cite{10.1145/3534678.3539300,10.1145/3580305.3599551,li2025tmetanet} (Table ~\ref{tab:datasets}). \textbf{Bitcoin-OTC} and \textbf{Bitcoin-Alpha} are who-trusts-whom signed networks from two Bitcoin trading platforms. \textbf{UCI-Message} records private messages between students at UC Irvine. \textbf{Reddit-Title} and \textbf{Reddit-Body} capture hyperlink interactions between subreddit communities. \textbf{AS-733} consists of daily snapshots of the Internet autonomous-system topology. Following ROLAND~\cite{10.1145/3534678.3539300}, we discretize Bitcoin, UCI, and Reddit into weekly snapshots, and AS-733 into daily snapshots; the same snapshot frequencies are used by all methods.

For dynamic \textbf{node classification} we use the three continuous-time JODIE benchmark~\cite{10.1145/3292500.3330895} datasets -- Wikipedia, Reddit, and MOOC -- discretized into fixed-width snapshots (Table ~\ref{tab:nc_datasets}). \textbf{Wikiped-ia} records edits by users on Wikipedia pages over a one-month window; the binary label is whether the editing user is subsequently banned. \textbf{Reddit} records users' posts to subreddits over the same window; the label indicates whether the user is later banned from posting. \textbf{MOOC} records student interactions with online-course units; the label is whether the student subsequently drops the course. All three are released as continuous-time streams~\cite{10.1145/3292500.3330895}; we discretize them into $\Delta=6$~h snapshots.

\begin{table}[t]
\centering
\caption{Dynamic node-classification dataset statistics. The JODIE benchmarks are released as continuous-time bipartite event streams. $T_{\Delta}$ is the number of snapshots after discretizing at the default bucket width $\Delta=6$~h; positive-rate is the fraction of source nodes whose state-change label is~$1$.}
\label{tab:nc_datasets}
\small
\setlength{\tabcolsep}{5pt}
\begin{tabular}{lrrrrl}
  \toprule
  Dataset      & $|\mathcal{V}|$ & $|\mathcal{E}|$ & $T_{\Delta}$ & pos.\ rate & Domain \\
  \midrule
  Wikipedia    & 8{,}227   & 157{,}474 & 124 & $0.14$\% & Edit logs \\
  Reddit       & 10{,}000  & 672{,}447 & 124 & $0.05$\% & Posts     \\
  MOOC         & 7{,}047   & 411{,}749 & 120 & $0.99$\% & E-learning \\
  \bottomrule
\end{tabular}
\end{table}
\subsection{Baselines}
\subsubsection{Link Prediction}
We compare against fourteen methods spanning every major DGNN family:
\begin{itemize}
  \item \textbf{Static.} GCN~\cite{kipf2017semisupervised} trained on the final snapshot.
  \item \textbf{Dynamic embeddings.} DynGEM~\cite{goyal2018dyngem}, dyngraph2vecAE and dyngraph2vecAERNN~\cite{GOYAL2020104816}.
  \item \textbf{Temporal-first.} EvolveGCN-H, EvolveGCN-O~\cite{egcn}.
  \item \textbf{Spatial-first.} GCRN-GRU, GCRN-LSTM and GCRN - Baseline \cite{10.1007/978-3-030-04167-0_33}; T-GCN~\cite{8809901}; the three ROLAND variants Moving-Average, MLP-Update and GRU-Update~\cite{10.1145/3534678.3539300}.
  \item \textbf{Meta-learning.} TMetaNet~\cite{li2025tmetanet}
\end{itemize}
Every experiment is repeated with five random seeds, and we report the mean MRR and standard deviation of MRR.

\subsubsection{Node Classification}
We compare SiST-GNN against two families of dynamic-GNN methods.
\begin{itemize}
\item \textbf{CTDG based models.} They consume the native continuo-us-time event stream and are therefore the harder comparison for any snapshot-based approach: JODIE~\cite{10.1145/3292500.3330895}, TGN~\cite{tgn_icml_grl2020}, DyREP~\cite{trivedi2018dyrep}, TGAT~\cite{Xu2020Inductive}, and APAN~\cite{10.1145/3448016.3457564}.
\item \textbf{DTDG based models.} They operate on the same discretized snapshot sequence as SiST-GNN: EvolveGCN-H and Evolv-eGCN-O~\cite {egcn}, ROLAND~\cite{10.1145/3534678.3539300}, a DTDG-adapted TGN~\cite{tgn_icml_grl2020}, the Hawkes-process-based TREND~\cite{10.1145/3485447.3512164}, and the MLP-mixer GraphMixer~\cite{cong2023do}.
\end{itemize}
CTDG baseline numbers and the two EvolveGCN entries are reproduced from the survey \cite{11202740}; the remaining DTDG baselines follow their original AUC protocol.
We do \emph{not} compare against pre-training or prompt-based methods~\cite{yu2025nodetime}, as they rely on a different upstream pre-training phase that is incompatible with our supervised pipeline; we revisit this gap in Section ~\ref{sec:limitations}.

\subsection{Implementation}
SiST-GNN is implemented in PyTorch~\cite{10.5555/3454287.3455008} and PyTorch Geometric~\cite{DBLP:journals/corr/abs-1903-02428}. For the link prediction task, our default configuration is a 2-layer SiST-GNN with GCNConv as the spatial backbone, hidden dimension $d_h = 128$, learnable node embeddings of dimension $128$, dropout $0.1$, and the Adam optimiser~\cite{kingma2015adam} with learning rate $10^{-3}$ and weight decay $10^{-5}$. We train for $100$ epochs.

For node classification task, our configuration is $\Delta=6$~h, $L=2$ layers, $d_h=256$, \textsc{GCNConv} backbone, \emph{balanced} BCE loss, $100$~epochs and early stopping with patience $5$, Adam at $10^{-3}$. The link-predict-ion head, loss, and negative-sampling protocol are described in Section ~\ref{sec:lp_head}; we use them unchanged.

The device configurations are Intel(R) Xeon(R) Gold 5120 CPU @ 2.20 GHz, 256 GB RAM, NVIDIA A100 40GB GPU. The source code is available at this \href{https://github.com/Roy-Shubhajit/SiST-GNN}{\faGithub\ Github Repository}

\subsection{Link Prediction Results}
\subsubsection{Fixed-Split Results}
\small

Table ~\ref{tab:fixed_split} reports MRR on the three datasets used in ROLAND's fixed-split evaluation. SiST-GNN attains $0.830$ on Bitcoin-OTC, $0.773$ on Bitcoin-Alpha, and $0.480$ on UCI-Message, dominating the previous best (ROLAND GRU) by $+277.2\%$, $+167.4\%$, and $+109.6\%$, respectively. The relative gain is largest on the denser, more strongly recurrent Bitcoin trust networks, where each edge carries a strong signal about future ratings; on UCI-Message, where interactions are sparser and less repetitive, the gain is more modest but still significant. 
\subsubsection{Live-Update Results}

Table ~\ref{tab:live_update} reports the live-update results across all six benchmarks. SiST-GNN achieves a state-of-the-art result across all datasets. Three observations stand out.
\begin{table}[t]
\centering
\caption{Live-update MRR (mean $\pm$ std.\ over 5 seeds). Top: baselines with standard BPTT training. Middle: baselines with ROLAND incremental training. Bottom: ROLAND variants, TMetaNet, and SiST-GNN. OOM = out-of-memory after five retries; Best per column in \textbf{bold}.}
\label{tab:live_update}
\resizebox{\linewidth}{!}{%
\begin{tabular}{lrrrrrr}
  \toprule
                  & AS-733            & Reddit-Title         & Reddit-Body          & UCI-Message          & Bitcoin-OTC          & Bitcoin-Alpha         \\
  \midrule
  \multicolumn{7}{c}{\textit{Baselines with standard training (BPTT)}} \\
  \midrule
  EvolveGCN-H~\cite{egcn}     & OOM               & OOM                  & $0.148 \pm 0.013$    & $0.061 \pm 0.040$    & $0.067 \pm 0.035$    & $0.079 \pm 0.032$     \\
  EvolveGCN-O~\cite{egcn}     & OOM               & OOM                  & OOM                  & $0.071 \pm 0.009$    & $0.085 \pm 0.022$    & $0.071 \pm 0.025$     \\
  GCRN-GRU~\cite{10.1007/978-3-030-04167-0_33}          & OOM               & OOM                  & OOM                  & $0.080 \pm 0.012$    & OOM                  & OOM                   \\
  GCRN-LSTM~\cite{10.1007/978-3-030-04167-0_33}         & OOM               & OOM                  & OOM                  & $0.083 \pm 0.001$    & OOM                  & OOM                   \\
  GCRN-Baseline~\cite{10.1007/978-3-030-04167-0_33}     & OOM               & OOM                  & OOM                  & $0.069 \pm 0.004$    & $0.152 \pm 0.011$    & $0.141 \pm 0.005$     \\
  T-GCN~\cite{8809901}                  & OOM               & OOM                  & OOM                  & $0.054 \pm 0.024$    & $0.128 \pm 0.049$    & $0.088 \pm 0.038$     \\
  \midrule
  \multicolumn{7}{c}{\textit{Baselines with ROLAND incremental training}} \\
  \midrule
  EvolveGCN-H~\cite{egcn}     & $0.251 \pm 0.079$ & $0.165 \pm 0.026$    & $0.102 \pm 0.010$    & $0.057 \pm 0.012$    & $0.076 \pm 0.022$    & $0.054 \pm 0.015$     \\
  EvolveGCN-O~\cite{egcn}     & $0.163 \pm 0.002$ & $0.047 \pm 0.004$    & $0.033 \pm 0.001$    & $0.066 \pm 0.012$    & $0.032 \pm 0.004$    & $0.034 \pm 0.002$     \\
  GCRN-GRU ~\cite{10.1007/978-3-030-04167-0_33}          & $0.344 \pm 0.001$ & $0.338 \pm 0.006$    & $0.217 \pm 0.004$    & $0.089 \pm 0.004$    & $0.173 \pm 0.003$    & $0.140 \pm 0.004$     \\
  GCRN-LSTM ~\cite{10.1007/978-3-030-04167-0_33}         & $0.341 \pm 0.001$ & $0.344 \pm 0.005$    & $0.216 \pm 0.000$    & $0.091 \pm 0.010$    & $0.174 \pm 0.004$    & $0.146 \pm 0.005$     \\
  GCRN-Baseline ~\cite{10.1007/978-3-030-04167-0_33}     & $0.336 \pm 0.002$ & $0.351 \pm 0.001$    & $0.218 \pm 0.002$    & $0.095 \pm 0.008$    & $0.183 \pm 0.002$    & $0.145 \pm 0.003$     \\
  T-GCN ~\cite{8809901}                  & $0.343 \pm 0.002$ & $0.391 \pm 0.004$    & $0.251 \pm 0.001$    & $0.080 \pm 0.015$    & $0.083 \pm 0.011$    & $0.069 \pm 0.013$     \\
  \midrule
  \multicolumn{7}{c}{\textit{ROLAND variants, topology-augmented meta-learning, and SiST-GNN}} \\
  \midrule
  ROLAND Moving-Avg. ~\cite{10.1145/3534678.3539300}    & $0.309 \pm 0.011$ & $0.362 \pm 0.007$    & $0.289 \pm 0.038$    & $0.075 \pm 0.006$    & $0.120 \pm 0.002$    & $0.096 \pm 0.010$     \\
  ROLAND MLP-Update ~\cite{10.1145/3534678.3539300}     & $0.329 \pm 0.021$ & $0.395 \pm 0.006$    & $0.291 \pm 0.008$    & $0.103 \pm 0.010$    & $0.154 \pm 0.010$    & $0.148 \pm 0.012$     \\
  ROLAND GRU-Update ~\cite{10.1145/3534678.3539300}     & $0.340 \pm 0.001$ & $0.425 \pm 0.015$    & $0.362 \pm 0.002$    & $0.112 \pm 0.008$    & $0.194 \pm 0.004$    & $0.157 \pm 0.007$     \\
  TMetaNet ~\cite{li2025tmetanet}             & $0.347\pm0.030$      & $0.427 \pm 0.010$  & $0.349 \pm 0.010$  & $0.109 \pm 0.009$  & $0.180 \pm 0.012$  & $0.176 \pm 0.005$   \\
  \midrule
  \textbf{SiST-GNN (ours)} & $\mathbf{0.599 \pm 0.000}$ & $\mathbf{0.720 \pm 0.002}$    & $\mathbf{0.694 \pm 0.020}$    & $\mathbf{0.328 \pm 0.001}$    & $\mathbf{0.551 \pm 0.025}$    & $\mathbf{ 0.519 \pm 0.024}$     \\
  \midrule
  Improvement & $+74.6\%$    & $+68.6\%$            & $+91.7\%$            & $+192.8\%$            & $+184.0\%$            & $+194.8\%$            \\
  \bottomrule
\end{tabular}%
}
\end{table}
Relative gains over the baseline range from $+68.6\%$ on Reddit-Title to $+194.8\%$ on Bitcoin-Alpha. The gain is consistent across different graph regimes - dense trust networks (Bitcoin), sparse messaging (UCI), large hyperlink graphs (Reddit), and long-horizon evolving topologies (AS-733), suggesting that the improvement is a property of the architecture rather than a specific dataset's structure.


BPTT-trained baselines (EvolveGCN, GCRN, T-GCN) fail with OOM on AS-733 and Reddit benchmarks. SiST-GNN, trained under ROLAND's incremental scheme, runs on all six datasets on a single GPU. Memory growth is constant in $T$ because hidden states are detached between snapshots, and the augmented graph at snapshot $t$ adds only $|\mathcal{E}_t|+N$ extra edges.
    \begin{table}[t]
    \centering
    \caption{Fixed-split MRR (mean $\pm$ std.\ over 5 seeds). Best per column in \textbf{bold}.}
    \label{tab:fixed_split}
    \begin{tabular}{lrrr}
  \toprule
                              & Bitcoin-OTC          & Bitcoin-Alpha        & UCI-Message          \\
  \midrule
  GCN~\cite{kipf2017semisupervised}                          & $0.0025$            & $0.0031$            & $0.1141$            \\
  DynGEM~\cite{goyal2018dyngem}                    & $0.0921$            & $0.1287$            & $0.1055$            \\
  dyngraph2vecAE~\cite{GOYAL2020104816}      & $0.0916$            & $0.1478$            & $0.0540$            \\
  dyngraph2vecAERNN~\cite{GOYAL2020104816}   & $0.1268$            & $0.1945$            & $0.0713$            \\
  EvolveGCN-H~\cite{egcn}           & $0.0690$            & $0.1104$            & $0.0899$            \\
  EvolveGCN-O~\cite{egcn}           & $0.0968$            & $0.1185$            & $0.1379$            \\
  \midrule
  ROLAND Moving-Avg.~\cite{10.1145/3534678.3539300}          & $0.047\pm 0.002$ & $0.140\pm0.011$ & $0.065\pm0.005$ \\
  ROLAND MLP~\cite{10.1145/3534678.3539300}                  & $0.078\pm0.002$ & $0.156 \pm0.011$ & $0.088 \pm 0.011$ \\
  ROLAND GRU~\cite{10.1145/3534678.3539300}                  & $0.220 \pm 0.017$ & $0.289\pm 0.012$ & $0.229\pm 0.062$ \\
  \midrule
  \textbf{SiST-GNN (ours)}    & $\mathbf{0.830 \pm 0.009}$    & $\mathbf{ 0.773 \pm 0.008}$    & $\mathbf{0.480 \pm 0.011}$    \\
  \midrule
  Improvement       & $+277.2\%$           & $+167.4\%$           & $+109.6\%$            \\
  \bottomrule
\end{tabular}
\end{table}

\begin{table}[ht]
\centering
\caption{Dynamic node classification test AUC (\%, mean$\,\pm\,$std for SiST-GNN over 25 random seeds). Baselines are grouped by temporal graph type: CTDG-based models and DTDG-based models. Best per column in \textbf{bold}.}
\label{tab:nc_results}
\setlength{\tabcolsep}{5pt}
\small
\begin{tabular}{lrrr}
\toprule
 & Wikipedia & Reddit & MOOC \\
\midrule
\multicolumn{4}{c}{\textit{CTDG based models}} \\
\midrule
JODIE~\cite{10.1145/3292500.3330895}           & $80.21$ & $63.92$ & $60.99$ \\
TGN~\cite{tgn_icml_grl2020}                    & $86.85$ & $68.21$ & $53.86$ \\
DyREP~\cite{trivedi2018dyrep}                   & $83.57$ & $53.10$ & $65.61$ \\
TGAT~\cite{Xu2020Inductive}                    & $85.71$ & $67.24$ & $63.11$ \\
APAN~\cite{10.1145/3448016.3457564}            & $87.01$ & $55.90$ & $64.12$ \\
\midrule
\multicolumn{4}{c}{\textit{DTDG based models}} \\
\midrule
EvolveGCN-H~\cite{egcn}     & $69.81$ & $59.27$ & $67.92$ \\
EvolveGCN-O~\cite{egcn}     & $79.20$ & $59.64$ & $67.82$ \\
ROLAND~\cite{10.1145/3534678.3539300}          & $58.86 \pm 10.30$ & $48.25 \pm 9.57$ & $49.93 \pm 6.74$ \\
TGN~\cite{tgn_icml_grl2020}                    & $50.61 \pm 13.60$ & $49.54 \pm 6.23$ & $50.33 \pm 4.47$ \\
TREND~\cite{10.1145/3485447.3512164}           & $69.92 \pm 9.27$ & $64.85 \pm 4.71$ & $66.79 \pm 5.44$ \\
GraphMixer~\cite{cong2023do}                   & $65.43 \pm 4.21$ & $60.21 \pm 5.36$ & $63.72 \pm 4.98$ \\
\midrule
\textbf{SiST-GNN (ours)}                       & $\mathbf{84.76 \pm 3.05}$ & $\mathbf{72.27 \pm 1.88}$ & $\mathbf{83.32 \pm 0.29}$ \\
\midrule
Improvement (DTDG)             & $+7.21\%$ & $+11.41\%$ & $+22.68\%$ \\
\bottomrule
\end{tabular}
\end{table}
\subsection{Node Classification Results}\label{sec:nc_results}

Table ~\ref{tab:nc_results} summarises the results on the node classification task. SiST-GNN achieves the best AUC on every dataset, even though the table includes the CTDG-based model family that operates on the \emph{native} continuous-time event stream while SiST-GNN sees only the discretized snapshot sequence. 

CTDG architectures (JODIE, TGN, DyREP, TGAT, APAN) consume the event ordering directly and are expected to be hard to match on a continuous-time benchmark. Yet SiST-GNN, fed only a snapshot sequence with $\Delta=6$~h buckets, achieves comparable results to CTDG models on every dataset.
Closing the CTDG--DTDG gap without any continuous-time machinery is the strongest single piece of evidence that the simultaneous-fusion construction, not the temporal interface, drives the improvement.

Restricted to the discrete-time competitors, SiST-GNN improves test AUC by $+7.21\%$ on Wikipedia (over EvolveGCN-O), $+11.4\%$ on Reddit (over TREND), and $+22.8\%$~points on MOOC (over EvolveGCN-H), a relative gain of $7-22\%$. The Wikipedia margin is particularly noteworthy because every DTDG baseline collapses to $58$--$79$\% on this dataset, suggesting that the snapshot interface alone is not the bottleneck; the bottleneck is the strict spatial/temporal ordering that SiST-GNN removes.



\subsection{Ablations and Sensitivity}
\paragraph{Sensitivity to hidden dimension.}
We sweep $d_h \in \{32, 64, 128, 256\}$ in Figure~\ref{fig:ablations}(a) and Figure~\ref{fig:ablations_nc}(b). The optimal hidden dimension is dataset-dependent and does not coincide across the two tasks. On the Bitcoin trust graphs, the model peaks at $d_h=64$ and degrades further at $d_h=256$, consistent with overfitting on these small, sparse graphs. UCI-Message shows the opposite trend: MRR climbs monotonically from $0.266$ at $d_h=32$ to $0.334$ at $d_h=256$ and saturates past $128$. The node-classification sweep is similarly heterogeneous; Wikipedia favors $d_h=128$, Reddit is best at the smallest setting, and MOOC is essentially flat across the sweep. We set $d_h=128$ as a single common configuration in Table \ref{tab:fixed_split}, \ref{tab:live_update}, and \ref{tab:nc_results}, but note that per-dataset tuning would yield small additional gains on the trust networks and Reddit.

\paragraph{Sensitivity to depth.}
We vary $L \in \{1, 2, 3, 4\}$ in Figure~\ref{fig:ablations}(b). A single layer is markedly weaker on every benchmark, which is expected: on the temporally augmented graph, one hop cannot reach a neighbor's temporal counterpart \emph{and} a second-hop neighbor. Performance rises through $L=3$ on all three datasets, and the fourth layer is essentially neutral. We see no evidence of over-smoothing in this range: the curves \emph{saturate} rather than decline. We default to $L=2$ in Table \ref{tab:fixed_split}, \ref{tab:live_update}, but $L=3$ gives a consistent $2$--$5$ point gain when the budget allows.

\paragraph{GNN-backbone robustness.}
Figure~\ref{fig:ablations}(c) and Figure~\ref{fig:ablations_nc}(a). On link prediction, GraphSAGE~\cite{10.5555/3294771.3294869} is the most consistent backbone on the Bitcoin networks, but is sharply worse on UCI-Message. The picture inverts on node classification: GCNConv dominates Reddit and MOOC, while GAT~\cite{velickovic2018graph} is best on Wikipedia. The augmented-graph construction is therefore agnostic to the specific aggregator.

\paragraph{Snapshot width $\Delta$ and class weighting.}
Figure~\ref{fig:ablations_nc}(c) sweeps the discretisation width
$\Delta \in \{1, 3, 6, 12, 24\}$\,h. AUC rises \emph{monotonically} with $\Delta$ on all three benchmarks. The Reddit effect is consistent with its very low positive rate ($0.04$\%): wider buckets concentrate more rare positive events into each snapshot, giving the LSTM a less noisy temporal signal and the GNN a denser graph to aggregate over. Figure~\ref{fig:ablations_nc}(d) is the complementary view through class weighting. Wikipedia and MOOC are insensitive to the BCE positive-class weight (within $\pm 0.015$ AUC across \texttt{none}, \texttt{sqrt}, \texttt{balanced}), whereas Reddit gains $+4.6$ points moving from unweighted ($0.673$) to balanced ($0.720$). Discretization and reweighting are thus two complementary handles for severe class imbalance.
\begin{figure*}[t]
\centering
\includegraphics[width=\textwidth]{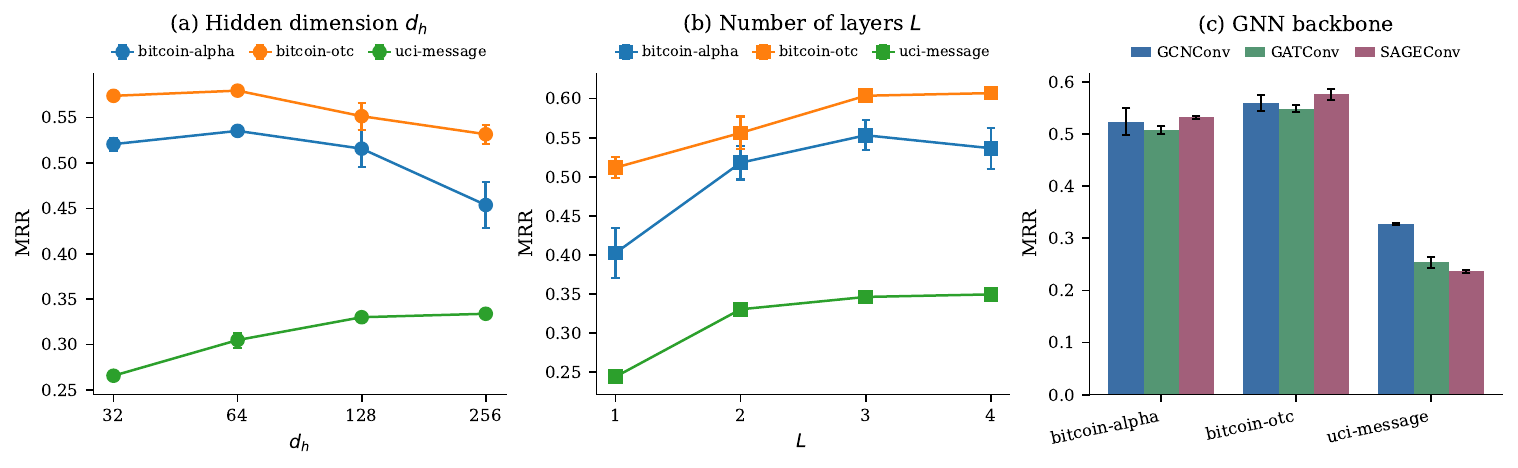}
\caption{Sensitivity and architectural ablations on the three fixed-split datasets. \textbf{(a)}~MRR vs.\ hidden dimension $d_h \in \{32, 64, 128, 256\}$; $d_h=128$ sits at the Pareto knee on every benchmark. \textbf{(b)}~MRR vs.\ number of stacked layers $L\in\{1,2,3,4\}$; $L=3$ is best on every dataset, with $L> 3$ over-smoothing on the smaller benchmarks. \textbf{(c)}~Swapping the spatial backbone (GCN / GAT / SAGE) preserves the qualitative ordering, confirming that the architectural gain is a property of the temporally augmented graph rather than of any particular spatial operator. Bars and markers show the mean over 5 seeds.}
\label{fig:ablations}
\end{figure*}
\begin{figure*}[t]
\centering
\includegraphics[width=\textwidth]{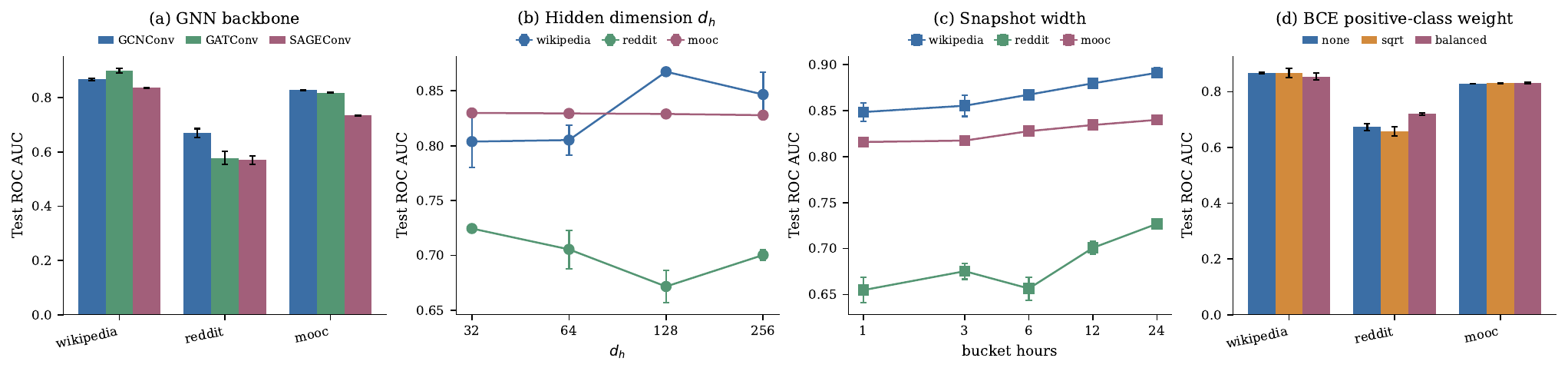}
\caption{Sensitivity ablations for dynamic node classification (Wikipedia, Reddit, MOOC). \textbf{(a)}~Spatial backbone (GCN / SAGE / GAT).\textbf{(b)}~Hidden dimension $d_h\in\{32,64,128,256\}$. \textbf{(c)}~AUC vs.\ snapshot width $\Delta\in\{1,3,6,12,24\}$~h. \textbf{(d)}~Positive-class weighting strategy: \emph{none} (plain BCE), \emph{sqrt}, \emph{balanced}. Markers/bars are the mean of $25$ seeds; error bars are one standard deviation.}
\label{fig:ablations_nc}
\end{figure*}
\section{Analysis and Discussion}\label{sec:analysis}

The sequential paradigms create an information bottleneck: a spatial-first model cannot condition its message-passing aggregator on temporal context, since none has been computed yet; a temporal-first model cannot condition its recurrent cell on the current neighborhood, since none has been observed yet. SiST-GNN dissolves this bottleneck. Consider a node $u$ with neighbors $v_1, v_2$: suppose $v_1$ has been consistently active (rich temporal signal, perhaps noisy current features) while $v_2$ has just appeared in the graph (clean current features, no history). A spatial-first model must use the structural representation of both before any temporal feature is seen; a temporal-first model must summarise both temporally before aggregation. SiST-GNN exposes the GNN to $v_1$'s history \emph{and} $v_2$'s current features on the same layer, letting the aggregation weights pick the right blend per-neighbor. This is the benefit of the doubled message pool discussed in Proposition ~\ref{prop:diversity}, and the source of the strict generalization result of Theorem ~\ref{thm:subsume}.

In datasets such as UCI-Message ($+192.8\%$), Bitcoin-OTC ($+184.0\%$), and Bitcoin-Alpha ($+194.8\%$), exhibit the strongest improvements under live-update, consistent with the hypothesis that trust propagation benefits most from jointly observing a neighbor's current rating and their historical trustworthiness trajectory. Reddit-Title's smaller gain ($+68.6\%$) is consistent with the relatively short temporal horizon of subreddit interaction patterns: the most predictive signal is already in the current snapshot. AS-733's $+74.6\%$ comes despite a very long sequence, suggesting that the node recurrent state does not saturate on this horizon.

We observe that the gap between SiST-GNN and ROLAND-GRU is largest when (i)~the graph has many recurring edges, so a neighbor's recent history is highly predictive of future interactions (trust networks), and (ii)~the per-snapshot node feature is uninformative on its own, so the model must rely on \emph{either} structural neighborhood \emph{or} temporal trajectory at any given step. When neither condition holds---e.g.\ on graphs whose per-snapshot features already encode a strong predictive signal---the cross-time edges contribute little additional information, and the gain shrinks, but never reverses: the extra messages can be down-weighted by the message-passing operator, but they are never adversarial.

\label{sec:limitations}
Discretization of CTDG into buckets discards the within-bucket event order and, as the $\Delta$-sweep in Figure~\ref{fig:ablations_nc}(c) shows, the chosen $\Delta$ has a measurable effect, indicating that fine-grained discretization actually \emph{hurts} when the positive rate is extreme because each bucket then contains too few positive events for the supervised signal. A complementary fix is to reweight the BCE loss, the \texttt{balanced} setting recovers most of the gap at fixed $\Delta$ (Figure~\ref{fig:ablations_nc}(d)). Beyond discretization, a recent line of work pre-trains a dynamic-GNN encoder on auxiliary tasks and then adapts it with node- and time-conditional prompts \cite{yu2025nodetime}. Because that family relies on an upstream pre-training phase, head-to-head numbers are not directly comparable to our supervised pipeline. Designing a pre-training protocol for SiST-GNN and benchmarking it against prompt-tuned baselines on the same downstream task is a future direction. Beyond link prediction and node classification, our architecture has not yet been evaluated on dynamic benchmarks such as TGB~\cite{huang2023temporal}, on multi-task regimes, or on graph-level prediction.

\section{Conclusion}\label{sec:conclusion}

We identified a structural limitation shared by virtually all snapshot-based dynamic GNNs, the strict ordering between spatial and temporal computation, and showed that fusing the two modalities \emph{within} a single message-passing step is effective. The proposed SiST-GNN architecture doubles the per-layer distinguishable message pool, generalizes both the spatial-first and the temporal-first paradigm via two-layer composition, is permutation-equivariant, and incurs only a constant-factor overhead in per-snapshot compute. Empirically, SiST-GNN gets state-of-the-art results on the nine standard public dynamic link prediction benchmarks, with relative MRR gains of $109-277\%$ (fixed-split) and $68-194\%$ (live-update) over the prior method. The same architecture also transfers to dynamic node classification. SiST-GNN improves the discrete-time baseline and achieves results comparable to continuous-time baselines despite the snapshot-discretization handicap. Beyond the empirical gains, instantiating the two views of a node as neighbors in a temporally augmented graph is a general construction that may be useful wherever a model must combine evolving and structural information in a single representation. Several directions remain open. First, the current augmentation materializes a cross-time edge for every node in consecutive snapshots, and learning a sparse mask over these cross-time edges would allow the construction to scale to TGB-size graphs~\cite{huang2023temporal}. Second, extending the construction to continuous-time event streams natively would remove the snapshot-discretization step required for the JODIE node-classification benchmarks and recover the fine-grained event ordering that the binning procedure discards, which we showed to be a measurable source of error when the positive rate is extreme. Third, although the present evaluation centers on dynamic link prediction and dynamic node classification, the temporally augmented graph is task-agnostic, and applying SiST-GNN to multi-task regimes and to graph-level prediction is a natural next step. Finally, designing a pre-training protocol for SiST-GNN and benchmarking it against prompt-tuned baselines~\cite{yu2025nodetime} under a unified downstream protocol would close the remaining methodological gap with the recent pre-training and prompt-tuning line of work.

\bibliography{main}
\bibliographystyle{tmlr}

\end{document}